\pgfplotsset{
	compat=newest,
	grid style={line width=.1pt, draw=gray!30},
	cycle list/Dark2,
	cycle multiindex* list={
		mark list*\nextlist
		Dark2\nextlist
	},
	legend image code/.code={
		\draw[mark repeat=2,mark phase=2]
		plot coordinates {
			(0cm,0cm)
			(0.15cm,0cm)        
			(0.3cm,0cm)         
		};%
	}
}
\newcommand{\figwidth}{0.25\textwidth}
\newcommand{\boxsize}{0.15\textwidth}
\newcommand{\ldp}{$\epsilon\text{-LDP }$}
\newcommand{\N}{\mathcal{N}}
\newcommand{\ek}{{\epsilon/m}}
\newcommand{\agg}{\textsc{Aggregate}}
\newcommand{\upd}{\textsc{Update}}
\newcommand{\etal}{\textit{et al}.}
\newcommand\numberthis{\addtocounter{equation}{1}\tag{\theequation}}
\newcommand{\ifequals}[3]{\ifthenelse{\equal{#1}{#2}}{#3}{}}
\newcommand{\case}[2]{#1 #2} 
\newenvironment{switch}[1]{\renewcommand{\case}{\ifequals{#1}}}{}
\newcommand{\map}[1]{
	\begin{switch}{#1}
		\case{cora}{Cora}
		\case{pubmed}{pubmed}
		\case{facebook}{Facebook}
		\case{lastfm}{LastFM}
		\case{gcn}{GCN}
		\case{gat}{GAT}
		\case{sage}{GraphSAGE}
	\end{switch}
}
\DeclareMathOperator{\E}{\mathbb{E}}
\DeclareMathOperator{\csch}{csch}
\definecolor{SeaGreen}{rgb}{0.35, 0.63, 0.53}
\newif\ifanonymous
\begin{document}

\title{Locally Private Graph Neural Networks}

\author{Sina Sajadmanesh}
\email{sajadmanesh@idiap.ch}
\orcid{0002-8834-0338}
\affiliation{\institution{Idiap Research Institute}\city{}\country{}}
\affiliation{\institution{EPFL}\city{}\country{}}

\author{Daniel Gatica-Perez}
\orcid{0001-5488-2182}
\email{gatica@idiap.ch}
\affiliation{\institution{Idiap Research Institute}\city{}\country{}}
\affiliation{\institution{EPFL}\city{}\country{}}

\renewcommand{\shortauthors}{Sajadmanesh and Gatica-Perez}

\begin{abstract}
	Graph Neural Networks (GNNs) have demonstrated superior performance in learning node representations for various graph inference tasks. However, learning over graph data can raise privacy concerns when nodes represent people or human-related variables that involve sensitive or personal information. While numerous techniques have been proposed for privacy-preserving deep learning over non-relational data, there is less work addressing the privacy issues pertained to applying deep learning algorithms on graphs. In this paper, we study the problem of node data privacy, where graph nodes have potentially sensitive data that is kept private, but they could be beneficial for a central server for training a GNN over the graph. To address this problem, we develop a privacy-preserving, architecture-agnostic GNN learning algorithm with formal privacy guarantees based on Local Differential Privacy (LDP). Specifically, we propose an LDP encoder and an unbiased rectifier, by which the server can communicate with the graph nodes to privately collect their data and approximate the GNN's first layer. To further reduce the effect of the injected noise, we propose to prepend a simple graph convolution layer, called KProp, which is based on the multi-hop aggregation of the nodes' features acting as a denoising mechanism. Finally, we propose a robust training framework, in which we benefit from KProp's denoising capability to increase the accuracy of inference in the presence of noisy labels. Extensive experiments conducted over real-world datasets demonstrate that our method can maintain a satisfying level of accuracy with low privacy loss.
\end{abstract}

%


\maketitle

\section{Introduction}\label{sec:intro}

In the past few years, extending deep learning models for graph-structured data has attracted growing interest, popularizing the concept of {Graph Neural Networks (GNNs)}~\cite{scarselli2008graph}. GNNs have
shown superior performance
in a wide range of applications in social sciences~\cite{hamilton2017inductive}, biology~\cite{rhee2017hybrid}, molecular chemistry~\cite{duvenaud2015convolutional}, and so on, achieving state-of-the-art results
in various graph-based learning tasks, such as node classification~\cite{kipf2017semi}, link prediction~\cite{zhang2018link}, and community detection~\cite{chen2017supervised}.
However, most real-world graphs associated with people or human-related activities, such as social and economic networks, are often sensitive and might contain personal information. For example in a social network, a user's friend list, profile information, likes and comments, etc., could potentially be private to the user.
To satisfy users' privacy expectations in accordance with recent legal data protection policies,
it is of great importance to develop privacy-preserving GNN models for applications that rely on graphs accessing users' personal data.

\paragraph{\textbf{Problem and motivation.}} In light of these privacy constraints, we define the problem of node data privacy. As illustrated in Figure~\ref{fig:problem}, in this setting, graph nodes, which may represent human users, have potentially sensitive data in the form of feature vectors and possibly labels that are kept private, but the topology of the graph is observable from the viewpoint of a central server, whose goal is to benefit from private node data to learn a GNN over the graph. This problem has many applications in social network analysis and mobile computing. For example, consider a social smartphone application server, e.g., a social network, messaging platform, or a dating app. As this server already has the data about social interactions between its users, the graph topology is not private to the server. However, the server could potentially benefit from users' personal information, such as their phone's sensor data, list of installed apps, or application usage logs, by training a GNN using these private features to learn better user representations for improving its services (e.g., the recommendation system). Without any means of data protection, however, this implies that the server should collect users' personal data directly, which can raise privacy concerns.

\begin{figure}
	\centering
	\includegraphics[width=0.6\columnwidth]{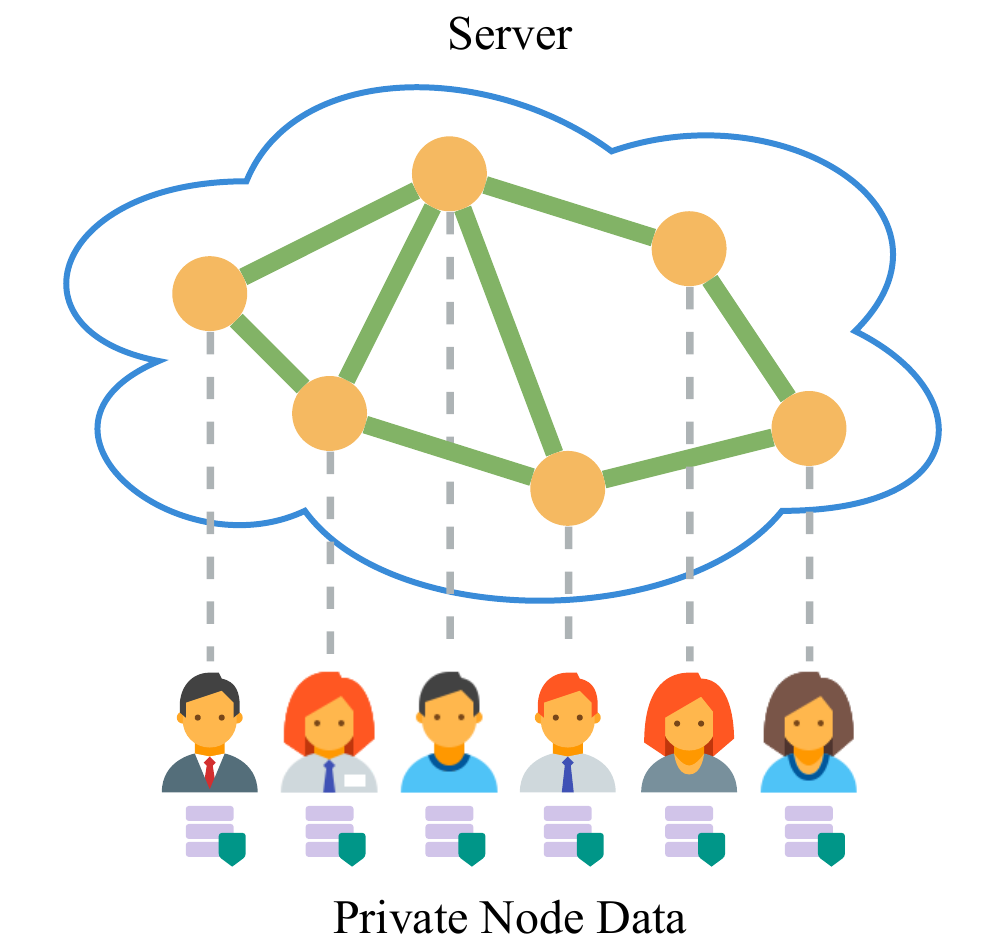}
	\caption{The node data privacy problem. A cloud server (e.g., a social network server) has a graph (e.g., the social graph), whose nodes, which may correspond to real users, have some private data that the server wishes to utilize for training a GNN on the graph, but cannot simply collect them due to privacy constraints.}
	\label{fig:problem}
\end{figure}

\paragraph{\textbf{Challenges.}} Training a GNN from private node data is a challenging task, mainly due to the relational nature of graphs. Unlike other deep learning models wherein the training data points are independent, in GNNs, the samples -- nodes of the graph -- are connected via links and exchange information through the message-passing framework during training \cite{hamilton2017representation}.
This fact renders common collaborative learning paradigms, such as federated learning \cite{kairouz2019advances}, infeasible due to their excessive communication overhead. The main reason is that in the absence of a trusted server, which is the primary assumption of our paper, every adjacent pair of nodes has to exchange their vector representations with each other multiple times during a single training epoch of a GNN, which requires significantly more communication compared to conventional deep neural networks, where the nodes only communicate with the server, independently.


\paragraph{\textbf{Contributions.}} In this paper, we propose the Locally Private Graph Neural Network (LPGNN), a novel privacy-preserving GNN learning framework for training GNN models using private node data. Our method has provable privacy guarantees based on {Local Differential Privacy (LDP)}~\cite{kasiviswanathan2011can}, can be used when either or both node features and labels are private, and can be combined with any GNN architecture independently.

To protect the privacy of node features, we propose an LDP mechanism, called the \emph{multi-bit mechanism}, through which the graph nodes can perturb their features that are then collected by the server with minimum communication overhead. These noisy features are then used to estimate the first graph convolution layer of the GNN. Given that graph convolution layers initially aggregate node features before passing them through non-linear activation functions, we benefit from this aggregation step as a denoising mechanism to average out the differentially private noise we have injected into the node features. To further improve the effectiveness of this denoising process and increase the estimation accuracy of the graph convolution, we propose to prepend a simple yet effective graph convolution layer based on the multi-hop aggregation of node features, called KProp, to the backbone GNN.

To preserve the privacy of node labels, we perturb them using the generalized randomized response mechanism \cite{kairouz2016discrete}. However, learning with perturbed labels introduces extra challenges, as the label noise could significantly degrade the generalization performance of the GNN. To this end, we propose a robust training framework, called Drop (label \textbf{d}enoising with p\textbf{rop}agation), in which we again benefit from KProp's denoising capability to increase the accuracy of noisy labels. Drop can be seamlessly combined with any GNN, is very easy to train, and does not rely on any clean (raw) data in any form, being features or labels, neither for training nor validation and hyper-parameter optimization.

Finally, we derive the theoretical properties of the proposed algorithms, including the formal privacy guarantees and error bound. We conduct extensive experiments over several real-world datasets, which demonstrate that our proposed LPGNN is robust against injected LDP noise, achieving a decent accuracy-privacy trade-off in the presence of noisy features and labels.


\paragraph{\textbf{Paper organization.}} The rest of this paper is organized as follows. In Section~\ref{sec:preliminaries}, we formally define the problem and provide the necessary backgrounds. Then, in Section~\ref{sec:method}, we explain our locally private GNN training algorithm. Details of experiments and their results are explained in Section~\ref{sec:xp}.
We review related work in Section~\ref{sec:review} and finally in Section~\ref{sec:conclusion}, we conclude the paper.
The proofs of all the theoretical findings are also presented in Appendix~\ref{sec:proof}.

\section{Preliminaries}\label{sec:preliminaries}

\paragraph{\textbf{Problem definition.}} We formally define the problem of learning a GNN with node data privacy. Consider a graph $\mathcal{G}=(\mathcal{V}, \mathcal{E}, \mathbf{X}, \mathbf{Y})$, where $\mathcal{E}$ is the link set and $\mathcal{V} = \mathcal{V_L} \cup \mathcal{V_U}$ is the union of the set of labeled nodes $\mathcal{V_L}$ and unlabeled ones $\mathcal{V_U}$. The feature matrix $\mathbf{X}\in\mathbb{R}^{|\mathcal{V}|\times d}$ comprises $d\text{-dimensional}$ feature vectors $\mathbf{x}_v$ for each node $v\in\mathcal{V}$, and $\mathbf{Y}\in\{0,1\}^{|\mathcal{V}|\times c}$ is the label matrix, where $c$ is the number of classes. For each node $v\in\mathcal{V_L}$, $\mathbf{y}_v$ is a one-hot vector, i.e., $\mathbf{y}_v\cdot\vec{\mathbf{1}} = 1$, where $\vec{\mathbf{1}}$ is the all-one vector, and for each node $v\in\mathcal{V_U}$, $\mathbf{y}_v$ is the all-zero vector $\vec{\mathbf{0}}$.
Now assume that a server has access to $\mathcal{V}$ and $\mathcal{E}$, but the feature matrix $\mathbf{X}$ and labels $\mathbf{Y}$ are private to the nodes and thus not observable by the server. The problem is: how can the server collaborate with the nodes to train a GNN over $\mathcal{G}$ without letting private data leave the nodes? To answer this question, we first present the required background about graph neural networks and local differential privacy in the following, and then in the next section, we describe our proposed method in detail. Note that since in our problem setting, nodes of the graph usually correspond to human users, we often use the terms ``node'' and ``user'' interchangably throughout the rest of the paper.

%

\paragraph{\textbf{Graph Neural Networks.}}
A GNN learns a representation for every node in the graph using a set of stacked graph convolution layers. Each layer gets an initial vector for each node and outputs a new embedding vector by aggregating the vectors of the adjacent neighbors followed by a non-linear transformation. More formally,
given a graph $\mathcal{G}=(\mathcal{V}, \mathcal{E}, \mathbf{X})$, an $L\text{-layer}$ GNN consists of $L$ graph convolution layers, where the embedding $\mathbf{h}^{l}_{v}$ of any node $v\in\mathcal{V}$ at layer $l$ is generated by aggregating the previous layer's embeddings of its neighbors, called the neighborhood aggregation step, as:
\begin{align}
	\mathbf{h}^{l}_{\N(v)} & = \agg{}_l\left(\{\mathbf{h}_u^{l-1}, \forall u \in \mathcal{N}(v)\}\right) \label{eq:agg} \\
	\mathbf{h}^{l}_{v}     & = \upd{}_l\left(\mathbf{h}^{l}_{\N(v)}\right) \label{eq:na}
\end{align}
where $\N(v)$ is the set of neighbors of $v$ (which could include $v$ itself) and $\mathbf{h}_{u}^{l-1}$ is the embedding of node $u$ at layer $l-1$. $\agg{}_l(.)$ and $\mathbf{h}^{l}_{\N(v)}$ are respectively the $l\text{-th}$ layer differentiable, permutation invariant aggregator function (such as mean, sum, or max) and its output on $\N(v)$. Finally, $\upd{}_l(.)$ is a trainable non-linear function, e.g., a neural network, for layer $l$. At the very first, we have $\mathbf{h}^0_v = \mathbf{x_v}$, i.e., the initial embedding of $v$ is its feature vector $\mathbf{x}_v$, and the last layer generates a $c\text{-dimensional}$ output followed by a softmax layer to predict node labels in a $c$-class node classification task. 

\paragraph{\textbf{Local Differential Privacy.}}
Local differential privacy (LDP) is an increasingly used approach for collecting private data and computing statistical queries, such as mean, count, and histogram. It has been already deployed by major technology companies, including Google \cite{erlingsson2014rappor}, Apple \cite{thakurta2017learning}, and Microsoft \cite{ding2017collecting}. The key idea behind LDP is that data holders do not need to share their private data with an untrusted data aggregator, but instead send a perturbed version of their data, which is not meaningful individually but can approximate the target query when aggregated. It includes two steps: (i) data holders perturb their data using a special randomized mechanism $\mathcal{M}$ and send the output to the aggregator; and (ii) the aggregator combines all the received perturbed values and estimates the target query. To prevent the aggregator from inferring the original private value from the perturbed one, the mechanism $\mathcal{M}$ must satisfy the following definition~\cite{kasiviswanathan2011can}:
\begin{definition}
	Given $\epsilon > 0$, a randomized mechanism $\mathcal{M}$ satisfies $\epsilon\text{-local differential}$ privacy, if for all possible pairs of user's private data $x$ and $x^\prime$, and for all possible outputs $y\in Range(\mathcal{M})$, we have:
	\begin{equation}
		\Pr[\mathcal{M}(x) = y] \le e^\epsilon\Pr[\mathcal{M}(x^\prime) = y]
	\end{equation}
\end{definition}
The parameter $\epsilon$ in the above definition is called the \emph{``privacy budget''} and is used to tune utility versus privacy: a smaller (resp. larger) $\epsilon$ leads to stronger (resp. weaker) privacy guarantees, but lower (resp. higher) utility. The above definition implies that the mechanism $\mathcal{M}$ should assign similar probabilities (controlled by $\epsilon$) to the outputs of different input values $x$ and $x^\prime$, so that by looking at the outputs, an adversary could not infer the input value with high probability, regardless of any side knowledge they might have. LDP is achieved for a deterministic function usually by adding a special random noise to its output that cancels out when calculating the target aggregation given a sufficiently large number of noisy samples.

%

\section{Proposed Method}\label{sec:method}

In this section, we describe our proposed framework for learning a GNN using private node data.
As described in the previous section, in the forward propagation of a GNN, the node features are only used as the input to the first layer's \agg{} function. 
This aggregation step is amenable to privacy, as it allows us to perturb node features using an LDP mechanism (e.g., by injecting random noise into the features) and then let the \agg{} function average out the injected noise (to an extent, not entirely), yielding a relatively good approximation of the neighborhood aggregation for the subsequent \upd{} function. The GNN's forward propagation can then proceed from this point without any modification to predict a class label for each node.

However, maintaining a proper balance between the accuracy of the GNN and the privacy of data introduces new challenges that need to be carefully addressed. On one hand, the node features to be collected are likely high-dimensional, so the perturbation of every single feature consumes a lot of the privacy budget. Suppose we want to keep our total budget $\epsilon$ low to provide better privacy protection. In that case, we need to perturb each of the $d$ features with $\epsilon/d$ budget (because the privacy budgets of the features add up together as the result of the composition theorem~\cite{dwork2014algorithmic}), which in turn results in adding more noise to the data that can significantly degrade the final accuracy. On the other hand, for the GNN to be able to cancel out the injected noise, the first layer's aggregator function must: (i) be in the form of a linear summation, and (ii) be calculated over a sufficiently large set of node features. However, not every GNN architecture employs a linear aggregator function, nor every node in the graph has many neighbors. In fact, in many real-world graphs that follow a Power-Law degree distribution, the number of low-degree nodes is much higher than the high-degree ones. Consequently, the estimated aggregation would most likely be very noisy, again leading to degraded performance.

To tackle the first challenge, we develop a multidimensional LDP method, called the \emph{multi-bit mechanism}, by extending the 1-bit mechanism~\cite{ding2017collecting} for multidimensional feature collection. It is composed of a user-side encoder and a server-side rectifier designed for maximum communication efficiency.
To address the second challenge, we propose a simple, yet effective graph convolution layer, called \emph{KProp}, which aggregates messages from an expanded neighborhood set that includes both the immediate neighbors and those nodes that are up to $K\text{-hops}$ away. By prepending this layer to the GNN, we can both combine our method with any GNN architecture and at the same time increase the graph convolution's estimation accuracy for low-degree nodes. In the experiments, we show that this technique can significantly boost the performance of our locally private GNN, especially for graphs with a lower average degree.

Finally, since the node labels are also considered private, we need another LDP mechanism to collect them privately. To this end, we use the generalized randomized response algorithm \cite{kairouz2016discrete}, which randomly flips the correct label to another one with a probability that depends on the privacy budget. However, learning the GNN with perturbed labels brings forward significant challenges in both training and validation. Regarding the former, training the GNN directly with the perturbed labels causes the model to overfit the noisy labels, leading to poor generalization performance. Regarding the latter, while it would be easy to validate the trained model using clean (non-perturbed) data, due to the privacy constraints of our problem, a more realistic setting is to assume that the server does not have access to any clean validation data. In this case, it is not clear how to perform model validation with noisy data, which is vital to prevent overfitting and optimize model hyper-parameters.

Although deep learning with noisy labels has been studied extensively in the literature \cite{song2020learning, patrini2017making, yi2019probabilistic, zhang2017mixup, zhang2018generalized, li2021unified, nt2019learning}, almost all the previous works either need clean features for training, require clean data for validation, or have been proposed for standard deep neural networks and do not consider the graph structure. Here, we propose \emph{Label \textbf{D}enoising with P\textbf{rop}agation -- Drop}, which incorporates the graph structure for label correction, and at the same time does not rely on any form of clean data (features or labels), neither for training nor validation. Given that nodes with similar labels tend to connect together more often \cite{wang2020unifying}, we utilize the graph topology to predict the label of a node by estimating the label frequency of its neighboring nodes. Still, if we rely on immediate neighbors, the true labels could not be accurately estimated due to insufficient neighbors for many nodes. Again, our key idea is to exploit KProp's denoising capability, but this time on node labels, to estimate the label frequency for each node and recovering the true label by choosing the most frequent one. Drop can easily be combined by any GNN architecture, and we show that it outperforms traditional baselines, especially at high-privacy regimes.

In the rest of this section, we describe our multi-bit mechanism, the KProp layer, and the Drop algorithms in more detail. The overview of our framework is depicted in Figure~\ref{fig:overview}. Note that the data perturbation step on the user-side has to be done only once for each node. The server collects the perturbed data once and stores it to train the GNN with minimum communication overhead.

\begin{figure*}
	\centering
	\includegraphics[width=0.7\textwidth]{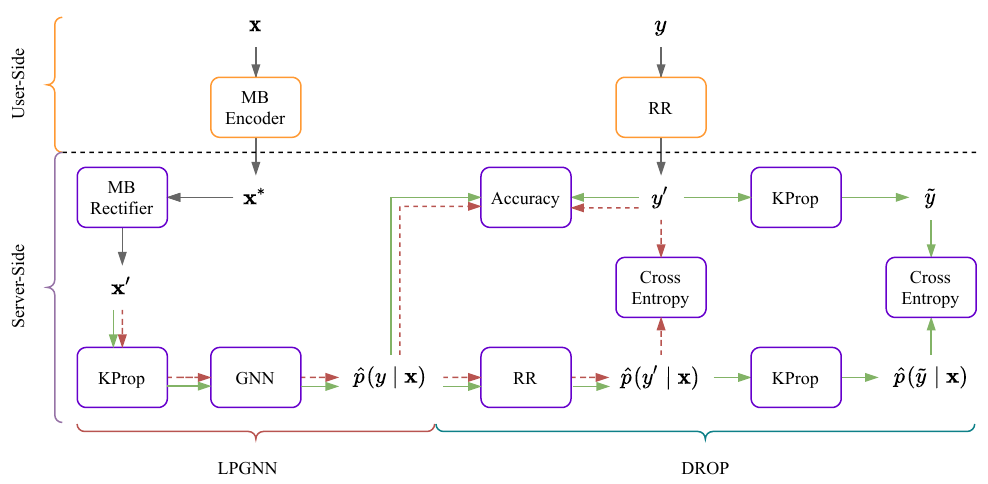}
	\caption{Overview of our locally private GNN training framework, featuring the multi-bit mechanism (MB Encoder and MB Rectifier), randomized response (RR), KProp layers, and Drop training. Users run multi-bit encoder and randomized response on their private features and labels, respectively, and send the output to the server, after which training begins. Green solid arrows and red dashed arrows indicate the training and validation paths, respectively.}
	\label{fig:overview}
\end{figure*}

\begin{algorithm}[t]
	\small
	\caption{Multi-Bit Encoder}\label{alg:mbm}
	\SetKwInOut{Input}{Input}
	\SetKwInOut{Output}{Output}
	\LinesNumbered
	\DontPrintSemicolon
	\Input{feature vector $\mathbf{x} \in [\alpha,\beta]^d$; privacy budget $\epsilon > 0$; range parameters $\alpha$ and $\beta$; sampling parameter $m\in\{1,2,\dots,d\}$.}
	\Output{encoded vector $\mathbf{x}^* \in \{-1,0,1\}^d.$}
	Let $\mathcal{S}$ be a set of $m$ values drawn uniformly at random without replacement from $\{1, 2, \ldots, d\}$\;
	\For{$i\in\{1, 2, \ldots, d\}$}
	{
		$s_i = 1$\text{ if }$i\in\mathcal{S}\text{ otherwise }s_i = 0$\;
		$t_i \sim \text{Bernoulli}\left( \frac{1}{e^\ek + 1} + \frac{x_{i} - \alpha}{\beta - \alpha}\cdot\frac{e^\ek - 1}{e^\ek + 1} \right)$\;
		$x^*_i = s_i\cdot(2t_i - 1)$\;
	}
	\Return $\mathbf{x}^* = [x^*_1,\dots,x^*_d]^T$\;
\end{algorithm}

\subsection{Collection of node features}
In this section, we explain our multi-bit mechanism for multidimensional feature perturbation, which is composed of an encoder and a rectifier, as described in the following.

\paragraph{\textbf{Multi-bit Encoder.}}
This part, which is executed at the user-side, perturbs the node's private feature vector and encodes it into a compact vector that is sent efficiently to the server.
More specifically, assume that every node $v$ owns a private $d\text{-dimensional}$ feature vector $\mathbf{x}_v$, whose elements lie in the range $[\alpha, \beta]$. When the server requests the feature vector of $v$, the node locally applies the multi-bit encoder on $\mathbf{x}_v$ to get the corresponding encoded feature vector $\mathbf{x}^*_v$, which is then sent back to the server. Since this process is supposed to be run only once, the generated $\mathbf{x}^*_v$ is recorded by the node to be returned in any subsequent calls to prevent the server from recovering the private feature vector using repeated queries.

Our multi-bit encoder is built upon the 1-bit mechanism \cite{ding2017collecting}, which returns either 0 or 1 for a single-dimensional input. However, as mentioned earlier, perturbing all the dimensions with a high-dimensional input results in injecting too much noise, as the total privacy budget has to be shared among all the dimensions. To balance the privacy-accuracy trade-off, we need to reduce dimensionality to decrease the number of dimensions that have to be perturbed. Still, since we cannot have the feature vectors of all the nodes at one place (due to privacy reasons), we cannot use conventional approaches, such as principal component analysis (PCA) or any other machine learning-based feature selection method. Instead, we randomly perturb a subset of the dimensions and then optimize the size of this subset to achieve the lowest variance in estimating the \agg{} function.

Algorithm~\ref{alg:mbm} describes this encoding process in greater detail. Intuitively, the encoder first uniformly samples $m$ out of $d$ dimensions without replacement, where $m$ is a parameter controlling how many dimensions are perturbed. Then, for each sampled dimension, the corresponding feature is randomly mapped to either -1 or 1, with a probability depending on the per-dimension privacy budget $\epsilon/m$ and the position of the feature value in the feature space, such that values closer to $\alpha$ (resp. $\beta$) are likely to be mapped to -1 (resp. 1). For other dimensions that are not sampled, the algorithm outputs 0. Therefore, a maximum of two bits per feature is enough to send $\mathbf{x}^*_v$ to the server. When $m=d$, our algorithm reduces to the 1-bit mechanism with a privacy budget of $\epsilon/d$ for every single dimension. The following theorem ensures that the multi-bit encoder is $\epsilon\text{-LDP}$ (proof in the Appendix).
\begin{theorem}\label{th:dp}
	The multi-bit encoder presented in Algorithm~\ref{alg:mbm} satisfies $\epsilon\text{-local differential}$ privacy for each node.
\end{theorem}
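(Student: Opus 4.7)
The plan is to verify the LDP inequality $\Pr[\mathcal{M}(\mathbf{x}) = \mathbf{y}] \le e^\epsilon\Pr[\mathcal{M}(\mathbf{x}') = \mathbf{y}]$ directly by writing out $\Pr[\mathcal{M}(\mathbf{x}) = \mathbf{y}]$ in closed form, since the encoder is a product of simple one-dimensional randomizers gated by a data-independent sampling step. First I would observe that any output $\mathbf{y} \in \{-1,0,1\}^d$ with $\|\mathbf{y}\|_0 \ne m$ has probability $0$ under $\mathcal{M}(\mathbf{x})$ for every input, so the inequality holds trivially, and we may restrict to $\mathbf{y}$ having exactly $m$ nonzero coordinates. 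Let $S(\mathbf{y}) = \{i : y_i \ne 0\}$; since the output at a non-sampled coordinate is deterministically $0$ and the output at a sampled coordinate is in $\{-1,1\}$, the event $\mathcal{M}(\mathbf{x}) = \mathbf{y}$ forces the sampled set $\mathcal{S}$ in Algorithm~\ref{alg:mbm} to equal $S(\mathbf{y})$.

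Using independence of the Bernoulli draws conditional on $\mathcal{S}$, I would then factor
\[
\Pr[\mathcal{M}(\mathbf{x}) = \mathbf{y}] = \binom{d}{m}^{-1}\prod_{i \in S(\mathbf{y})} q_i(x_i; y_i),
\]
where $q_i(x_i; 1) = \tfrac{1}{e^{\epsilon/m}+1} + \tfrac{x_i-\alpha}{\beta-\alpha}\cdot\tfrac{e^{\epsilon/m}-1}{e^{\epsilon/m}+1}$ and $q_i(x_i;-1) = 1 - q_i(x_i;1)$. The combinatorial sampling factor $\binom{d}{m}^{-1}$ is data-independent and therefore cancels when forming the ratio $\Pr[\mathcal{M}(\mathbf{x})=\mathbf{y}]/\Pr[\mathcal{M}(\mathbf{x}')=\mathbf{y}]$.

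The technical core is then a per-coordinate bound. Because $x_i, x_i' \in [\alpha,\beta]$, the quantity $q_i(\,\cdot\,;1)$ ranges over $[\tfrac{1}{e^{\epsilon/m}+1},\tfrac{e^{\epsilon/m}}{e^{\epsilon/m}+1}]$, and by symmetry so does $q_i(\,\cdot\,;-1)$. Consequently, for each $i \in S(\mathbf{y})$ and each sign $y_i \in \{-1,1\}$,
\[
\frac{q_i(x_i;y_i)}{q_i(x_i';y_i)} \;\le\; \frac{e^{\epsilon/m}/(e^{\epsilon/m}+1)}{1/(e^{\epsilon/m}+1)} \;=\; e^{\epsilon/m}.
\]
Multiplying these bounds over the $|S(\mathbf{y})| = m$ sampled coordinates yields the desired overall bound $(e^{\epsilon/m})^m = e^\epsilon$, completing the argument.

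I do not foresee a hard obstacle; the proof is a routine composition-style calculation. The one place to be careful is to treat the uniform sampling of $\mathcal{S}$ and the coordinate-wise Bernoulli perturbations cleanly: the sampling contributes no privacy loss because it is independent of $\mathbf{x}$, while the Bernoulli step spends exactly $\epsilon/m$ of budget per sampled coordinate. Matching ``$m$ perturbed coordinates'' with ``per-dimension budget $\epsilon/m$'' is what makes the total bound tight at $e^\epsilon$, and is the only place where one needs to verify that the minimum/maximum values of $q_i$ are actually attained for admissible $x_i \in [\alpha,\beta]$, which they are at the endpoints.
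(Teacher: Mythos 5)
Your proposal is correct and follows essentially the same route as the paper's proof: a per-coordinate decomposition in which the data-independent sampling contributes nothing to the ratio and each of the $m$ perturbed coordinates contributes at most a factor $e^{\epsilon/m}$, multiplying to $e^\epsilon$. If anything, your explicit conditioning on the event $\mathcal{S}=S(\mathbf{y})$ with the cancelling $\binom{d}{m}^{-1}$ factor is a slightly more careful treatment of the without-replacement sampling than the paper's coordinate-wise product, which tacitly treats the coordinates as independent.
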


\paragraph{\textbf{Multi-bit Rectifier.}}
The output of the multi-bit encoder is statistically biased, i.e., $\E\left[\mathbf{x}^*\right]\ne\mathbf{x}$. Therefore, the goal of the multi-bit rectifier, executed at server-side, is to convert the encoded vector $\mathbf{x}^*$ to an unbiased perturbed vector $\mathbf{x}^\prime$, such that $\E\left[\mathbf{x}^\prime\right]=\mathbf{x}$, as follows:
\begin{equation}
	\mathbf{x}^{\prime} = Rect(\mathbf{x}^*) = \frac{d(\beta-\alpha)}{2m}\cdot\frac{e^\ek + 1}{e^\ek - 1}\cdot\mathbf{x}^* + \frac{\alpha + \beta}{2} \label{eq:est}
\end{equation}
Note that this is not a denoising process to remove the noise from $\mathbf{x}^*$, but the output vector $\mathbf{x}^\prime$ is still noisy and does not have any meaningful information about the private vector $\mathbf{x}$. The only difference between $\mathbf{x}^*$ and $\mathbf{x}^\prime$ is that the latter is unbiased, while the former is not. The following results entail from the multi-bit rectifier:

\begin{proposition}\label{prop:unbiased}
	The multi-bit rectifier defined by (\ref{eq:est}) is unbiased.
\end{proposition}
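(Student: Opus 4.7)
The plan is to verify unbiasedness coordinate-wise: show that $\E[x'_i]=x_i$ for every $i\in\{1,\dots,d\}$, then invoke linearity of expectation. Since the rectifier in (\ref{eq:est}) is an affine function of $\mathbf{x}^*$, the whole task reduces to computing $\E[x^*_i]$ under the randomness of the encoder (Algorithm~\ref{alg:mbm}), and then checking that the rescaling factor $\tfrac{d(\beta-\alpha)}{2m}\cdot\tfrac{e^{\epsilon/m}+1}{e^{\epsilon/m}-1}$ together with the additive shift $\tfrac{\alpha+\beta}{2}$ are precisely the constants that turn this expectation into $x_i$.

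The first step is to decompose $x^*_i = s_i\cdot(2t_i-1)$ and note that $s_i$ and $t_i$ are independent (the sampling set $\mathcal{S}$ is drawn before the Bernoulli variables $t_i$ are generated, and the distribution of $t_i$ does not depend on $\mathcal{S}$). Then $\E[s_i] = \Pr[i\in\mathcal{S}] = m/d$ since $\mathcal{S}$ is a uniform size-$m$ subset of $\{1,\dots,d\}$. For the Bernoulli factor, using $p_i = \tfrac{1}{e^{\epsilon/m}+1} + \tfrac{x_i-\alpha}{\beta-\alpha}\cdot\tfrac{e^{\epsilon/m}-1}{e^{\epsilon/m}+1}$, a short simplification gives
\begin{equation*}
\E[2t_i-1] = 2p_i - 1 = \frac{e^{\epsilon/m}-1}{e^{\epsilon/m}+1}\cdot\frac{2x_i-\alpha-\beta}{\beta-\alpha}.
\end{equation*}
Multiplying the two expectations yields $\E[x^*_i] = \tfrac{m}{d}\cdot\tfrac{e^{\epsilon/m}-1}{e^{\epsilon/m}+1}\cdot\tfrac{2x_i-\alpha-\beta}{\beta-\alpha}$.

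The final step is to plug this into (\ref{eq:est}). The factor $\tfrac{d(\beta-\alpha)}{2m}\cdot\tfrac{e^{\epsilon/m}+1}{e^{\epsilon/m}-1}$ cancels against $\tfrac{m}{d}\cdot\tfrac{e^{\epsilon/m}-1}{e^{\epsilon/m}+1}\cdot\tfrac{1}{\beta-\alpha}$ exactly, leaving $\tfrac{2x_i-\alpha-\beta}{2}$, which combines with the shift $\tfrac{\alpha+\beta}{2}$ to produce $x_i$. Thus $\E[\mathbf{x}'] = \mathbf{x}$ componentwise.

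There is no real conceptual obstacle here; the argument is a bookkeeping exercise. The only thing one has to be careful about is justifying the independence of $s_i$ and $t_i$ (so that the expectation factorizes) and keeping track of the Bernoulli parameter while simplifying $2p_i-1$, since a sign error would propagate through the cancellation. Everything else is designed so the scaling and shift constants in the rectifier are exactly the inverses of the bias introduced by subsampling and by mapping $x_i$ through the 1-bit-style perturbation.
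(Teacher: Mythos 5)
Your proof is correct and follows essentially the same route as the paper: compute $\E[x^*_i]$ coordinate-wise (the paper conditions on $s_i$ via total expectation where you invoke independence of $s_i$ and $t_i$, but the resulting expression $\tfrac{m}{d}\cdot\tfrac{e^{\epsilon/m}-1}{e^{\epsilon/m}+1}\bigl(2\tfrac{x_i-\alpha}{\beta-\alpha}-1\bigr)$ is identical to the paper's Lemma on the encoder's expectation), then substitute into (\ref{eq:est}) and observe the exact cancellation. Nothing is missing.
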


\begin{proposition}\label{prop:var}
	For any node $v$ and any $i\in\{1,2,\dots,d\}$, the variance of the multi-bit rectifier defined by (\ref{eq:est}) at dimension $i$ is:
	\begin{equation}\label{eq:var}
		Var[{x^\prime_{v,i}}] = \frac{d}{m}\cdot\left(\frac{\beta-\alpha}{2}\cdot\frac{e^\ek + 1}{e^\ek - 1}\right)^2 - \left({x}_{v,i} - \frac{\alpha + \beta}{2}\right)^2
	\end{equation}
\end{proposition}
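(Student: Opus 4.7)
The plan is to reduce everything to a one-coordinate variance computation by exploiting the fact that the rectifier is an affine function of the encoded output. Writing $C = \frac{d(\beta-\alpha)}{2m}\cdot\frac{e^{\epsilon/m}+1}{e^{\epsilon/m}-1}$, equation~(\ref{eq:est}) gives $x'_{v,i} = C \cdot x^*_{v,i} + (\alpha+\beta)/2$, so $\mathrm{Var}[x'_{v,i}] = C^2 \cdot \mathrm{Var}[x^*_{v,i}]$, and the task reduces to evaluating $\mathrm{Var}[x^*_{v,i}] = \mathbb{E}[(x^*_{v,i})^2] - (\mathbb{E}[x^*_{v,i}])^2$ for the random variable produced by Algorithm~\ref{alg:mbm}.

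For the second moment, I would use the critical structural observation that $x^*_{v,i} = s_i(2t_i - 1)$ where $s_i \in \{0,1\}$ and $2t_i-1 \in \{-1,+1\}$. Squaring gives $(x^*_{v,i})^2 = s_i^2(2t_i-1)^2 = s_i$. The marginal distribution of $s_i$ under uniform sampling of $m$ indices from $\{1,\dots,d\}$ without replacement is simply $\mathrm{Bernoulli}(m/d)$ (the without-replacement aspect only affects the joint law across coordinates, not the marginal of one coordinate), so $\mathbb{E}[(x^*_{v,i})^2] = m/d$. For the first-moment term, rather than expanding the Bernoulli probability in line~4 of the algorithm by hand, I would invoke Proposition~\ref{prop:unbiased}: since $\mathbb{E}[x'_{v,i}] = x_{v,i}$, the affine relation forces $C \cdot \mathbb{E}[x^*_{v,i}] = x_{v,i} - (\alpha+\beta)/2$, hence $(\mathbb{E}[x^*_{v,i}])^2 = (x_{v,i} - (\alpha+\beta)/2)^2 / C^2$.

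Putting the pieces together,
\begin{equation*}
\mathrm{Var}[x'_{v,i}] = C^2 \cdot \frac{m}{d} - C^2 \cdot \frac{(x_{v,i} - (\alpha+\beta)/2)^2}{C^2} = \frac{m}{d}\,C^2 - \left(x_{v,i} - \frac{\alpha+\beta}{2}\right)^2,
\end{equation*}
and substituting $C^2 = \frac{d^2(\beta-\alpha)^2}{4m^2}\left(\frac{e^{\epsilon/m}+1}{e^{\epsilon/m}-1}\right)^2$ immediately yields the stated expression $\frac{d}{m}\left(\frac{\beta-\alpha}{2}\cdot\frac{e^{\epsilon/m}+1}{e^{\epsilon/m}-1}\right)^2 - (x_{v,i} - \frac{\alpha+\beta}{2})^2$.

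There is no really hard step here; the only place worth being careful is in justifying $\mathbb{E}[s_i] = m/d$ under without-replacement sampling (a one-line symmetry argument), and in recognizing the shortcut through Proposition~\ref{prop:unbiased} rather than manipulating the Bernoulli parameter $\tfrac{1}{e^{\epsilon/m}+1} + \tfrac{x_{v,i}-\alpha}{\beta-\alpha}\cdot\tfrac{e^{\epsilon/m}-1}{e^{\epsilon/m}+1}$ directly. The clean identity $(x^*_{v,i})^2 = s_i$ is what makes the whole computation painless.
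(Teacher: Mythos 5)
Your proposal is correct and follows essentially the same route as the paper: both reduce via the affine relation to $\mathrm{Var}[x^*_{v,i}]$, and both get the second moment from the observation that conditioned on $s_i=1$ one has $(x^*_{v,i})^2=1$, giving $\E[(x^*_{v,i})^2]=m/d$ (the paper packages this as Lemma~\ref{lem:expvar}). Your only deviation is backing out $\E[x^*_{v,i}]$ from the already-established Proposition~\ref{prop:unbiased} rather than re-expanding the Bernoulli parameter, which is a legitimate and slightly cleaner shortcut since $C>0$.
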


The variance of an LDP mechanism is a key factor affecting the estimation accuracy: a lower variance usually leads to a more accurate estimation. Therefore, we exploit the result of Proposition~\ref{prop:var} to find the optimal sampling parameter $m$ in the multi-bit encoder (Algorithm~\ref{alg:mbm}) that minimizes the rectifier's variance, as follows:
\begin{proposition}\label{prop:optm}
	The optimal value of the sampling parameter $m$ in Algorithm~\ref{alg:mbm}, denoted by $m^\star$, is obtained as:
	\begin{equation}
		m^\star = \max(1, \min(d, \, \left\lfloor \frac{\epsilon}{2.18}\right\rfloor))
	\end{equation}
\end{proposition}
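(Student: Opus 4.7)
The plan is to relax the integer constraint on $m$, solve the continuous problem in closed form, and then discretize. By Proposition~\ref{prop:var}, the term $(x_{v,i}-(\alpha+\beta)/2)^2$ does not depend on $m$, so minimizing $Var[x'_{v,i}]$ is equivalent to minimizing
\[
f(m)\;=\;\frac{1}{m}\left(\frac{e^{\epsilon/m}+1}{e^{\epsilon/m}-1}\right)^{\!2}
\]
over $m\in\{1,2,\dots,d\}$. To simplify, I would substitute $u=\epsilon/m$ (so $m=\epsilon/u$) and use the identity $(e^u+1)/(e^u-1)=\coth(u/2)$ to rewrite the objective, up to the constant factor $1/\epsilon$, as $g(u)=u\coth^{2}(u/2)$.

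Differentiating with $\tfrac{d}{du}\coth(u/2)=-\tfrac{1}{2}\csch^{2}(u/2)$ gives
\[
g'(u)\;=\;\coth^{2}(u/2)\;-\;u\,\coth(u/2)\,\csch^{2}(u/2).
\]
Setting $g'(u)=0$, then multiplying through by $\sinh^{2}(u/2)/\coth(u/2)$, the stationarity condition collapses to $\cosh(u/2)\sinh(u/2)=u$, i.e.\ $\sinh(u)=2u$. This transcendental equation has a single positive root: letting $h(u)=\sinh(u)-2u$, we have $h(0)=0$, $h'(0)=-1<0$, and $h''(u)=\sinh(u)>0$ for $u>0$, so $h$ is strictly convex on $(0,\infty)$ and crosses zero exactly once away from the origin. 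I would identify the root numerically as $u^{\star}\approx 2.1773$, and check the sign of $g'$ on either side of $u^\star$ to confirm it is a minimum, giving the continuous optimum $m_{c}^{\star}=\epsilon/u^{\star}\approx\epsilon/2.18$.

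To return to the discrete feasible set, I would invoke the unimodality of $g$ (hence of $f$) that follows from the sign analysis above: the integer minimizer must lie adjacent to $m_c^\star$, and the paper adopts $\lfloor\epsilon/2.18\rfloor$ as a simple and near-optimal rounding. The outer $\max(1,\min(d,\cdot))$ then handles the two boundary regimes where the unconstrained optimum lies outside $\{1,\dots,d\}$; monotonicity of $f$ on each side of $m_c^\star$ guarantees that this clipping is optimal.

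The main obstacle is the transcendental equation $\sinh(u)=2u$: there is no closed-form root, so the constant $2.18$ is inherently a numerical approximation. The substantive analytical work is therefore (i) establishing existence and uniqueness of the positive root via the convexity of $h$, and (ii) establishing unimodality of $g$ so that the straightforward floor-and-clip rule genuinely corresponds to (essentially) the integer minimizer on $\{1,\dots,d\}$.
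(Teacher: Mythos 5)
Your proposal is correct and follows essentially the same route as the paper: the substitution $u=\epsilon/m$, the reduction to minimizing $u\coth^2(u/2)$, the stationarity condition $\sinh(u)=2u$ with numerical root $\approx 2.18$, and the final floor-and-clip step all match the paper's proof. The only minor (and arguably favorable) differences are that you discard the $x$-dependent term of the variance by observing it is independent of $m$, whereas the paper first takes the worst case over $x$ (that term vanishes at $x=(\alpha+\beta)/2$) and minimizes the resulting upper bound, and that you establish uniqueness of the positive root analytically via convexity of $\sinh(u)-2u$, where the paper appeals to a plot.
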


The above proposition implies that in the high-privacy regime $\epsilon \le 2.18$, the multi-bit mechanism perturbs only one random dimension. Therefore, this process is similar to a randomized one-hot encoding, except that here, the aggregation of these one-hot encoded features approximates the aggregation of the raw features.

\subsection{Approximation of graph convolution}
Upon collecting the encoded vectors $\mathbf{x}^*_v$ from every node $v$ and generating the corresponding perturbed vectors $\mathbf{x}^\prime_v$ using the multi-bit rectifier, the server can initiate the training of the GNN. In the first layer, the embedding for an arbitrary node $v$ is generated by the following (layer indicator subscripts and superscripts are omitted for simplicity):
\begin{align}
	\widehat{\mathbf{h}}_{\N(v)} & = \agg{}\left(\{\mathbf{x}_u^{\prime}, \forall u \in \mathcal{N}(v)\}\right) \label{eq:h} \\
	\mathbf{h}_{v}               & = \upd{}\left({\widehat{\mathbf{h}}_{\N(v)}}\right)
\end{align}
where $\widehat{\mathbf{h}}_{\N(v)}$ is the estimation of the first layer \agg{} function of any node $v$ by aggregating perturbed vectors $\mathbf{x}^\prime_u$ of all the nodes $u$ adjacent to $v$. After this step, the server can proceed with the rest of the layers to complete the forward propagation of the model, exactly similar to a standard GNN. If the \agg{} function is linear on its input (e.g., it is a weighted summation of the input vectors), the resulting aggregation would also be unbiased, as stated below:

\begin{corollary}\label{cor:unbiased}
	Given a linear aggregator function, the aggregation defined by (\ref{eq:h}) is an unbiased estimation for (\ref{eq:agg}) at layer $l=1$.
\end{corollary}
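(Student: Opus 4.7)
The plan is to derive the corollary as a direct consequence of Proposition~\ref{prop:unbiased} combined with linearity of expectation. First I would write out explicitly what the estimated aggregation $\widehat{\mathbf{h}}_{\N(v)}$ in equation~(\ref{eq:h}) and the true layer-1 aggregation $\mathbf{h}^1_{\N(v)}$ in equation~(\ref{eq:agg}) look like under the hypothesis that $\agg{}_1$ is linear. By assumption, a linear aggregator can be written (for some coefficients $w_{vu}$ possibly depending on the graph structure but not on the feature values) as $\agg{}_1(\{\mathbf{z}_u : u \in \N(v)\}) = \sum_{u\in\N(v)} w_{vu}\,\mathbf{z}_u$, and at the first layer the inputs are the raw features $\mathbf{h}^0_u = \mathbf{x}_u$, while the server-side estimation replaces each $\mathbf{x}_u$ with the rectified perturbed vector $\mathbf{x}^\prime_u$.

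Next I would take expectation of $\widehat{\mathbf{h}}_{\N(v)}$ over the randomness of the multi-bit encoder (independently applied at each node). Using linearity of expectation,
\begin{equation*}
\E\bigl[\widehat{\mathbf{h}}_{\N(v)}\bigr] = \E\Bigl[\sum_{u\in\N(v)} w_{vu}\,\mathbf{x}^\prime_u\Bigr] = \sum_{u\in\N(v)} w_{vu}\,\E[\mathbf{x}^\prime_u].
\end{equation*}
Then I invoke Proposition~\ref{prop:unbiased}, which guarantees $\E[\mathbf{x}^\prime_u] = \mathbf{x}_u$ for every $u$, so the right-hand side becomes $\sum_{u\in\N(v)} w_{vu}\,\mathbf{x}_u = \agg{}_1(\{\mathbf{x}_u : u\in\N(v)\}) = \mathbf{h}^1_{\N(v)}$, which is exactly the target expression in~(\ref{eq:agg}) at $l=1$.

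There is essentially no obstacle here: the corollary is a one-line consequence of the two ingredients already established, namely (i) linearity of $\agg{}_1$ as supplied by the hypothesis, and (ii) the unbiasedness of the multi-bit rectifier from Proposition~\ref{prop:unbiased}. The only thing worth stating carefully in the write-up is that the encoder is applied independently across nodes, so no joint-distribution subtlety arises, and that the linearity assumption is exactly what lets expectation pass through the aggregator; without it (e.g., for a max aggregator), the unbiasedness would not transfer.
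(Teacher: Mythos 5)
Your proposal is correct and matches the paper's own proof: both write the expectation of the estimated aggregation, pass the expectation through the linear aggregator, and conclude by the unbiasedness of the multi-bit rectifier (Proposition~\ref{prop:unbiased}). Your explicit weighted-sum form of the aggregator and the remark about independence are harmless elaborations (note that linearity of expectation does not even require independence across nodes), but the argument is the same one-line consequence the paper gives.
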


The following proposition also shows the relationship of the estimation error in calculating the \agg{} function and the neighborhood size $|\N(v)|$ for the special case of using the mean aggregator function:
\begin{proposition}\label{prop:error}
	Given the mean aggregator function for the first layer and $\delta > 0$, with probability at least $1-\delta$, for any node $v$, we have:
	\begin{equation}\label{eq:err}
		\max_{i\in\{1,\dots,d\}}\left|(\widehat{\mathbf{h}}_{\N(v)})_i - ({\mathbf{h}}_{\N(v)})_i\right| = \mathcal{O}\left(\frac{\sqrt{d \log(d/\delta)}}{\epsilon \sqrt{|\N(v)|}}\right)
	\end{equation}
\end{proposition}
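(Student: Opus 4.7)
The plan is to fix a coordinate $i \in \{1,\dots,d\}$, obtain a high-probability bound on $|(\widehat{\mathbf{h}}_{\N(v)})_i - ({\mathbf{h}}_{\N(v)})_i|$ for that coordinate, and then take a union bound across the $d$ coordinates to upgrade a per-coordinate failure probability $\delta/d$ into an overall $\delta$ for the maximum. With the mean aggregator, $(\widehat{\mathbf{h}}_{\N(v)})_i = \frac{1}{n}\sum_{u\in\N(v)} x'_{u,i}$, where $n := |\N(v)|$. By Proposition~\ref{prop:unbiased} each summand is unbiased for $x_{u,i}$, so the empirical mean is unbiased for $({\mathbf{h}}_{\N(v)})_i$; and by Proposition~\ref{prop:var} each summand has variance at most $\sigma_m^2 := \frac{d}{m}\!\left(\frac{\beta-\alpha}{2}\cdot\frac{e^{\epsilon/m}+1}{e^{\epsilon/m}-1}\right)^{\!2}$ after discarding the non-positive correction $(x_{u,i}-(\alpha+\beta)/2)^2$. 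Because the encoder returns values in $\{-1,0,1\}$, the rectified $x'_{u,i}$ is almost surely bounded with total range $2B_m$, where $B_m := \frac{d(\beta-\alpha)}{2m}\cdot\frac{e^{\epsilon/m}+1}{e^{\epsilon/m}-1}$.

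Having both a variance and a deterministic range available, the natural tool is Bernstein's inequality, which yields
\[\Pr\!\left[\,\left|(\widehat{\mathbf{h}}_{\N(v)})_i - ({\mathbf{h}}_{\N(v)})_i\right| > t\,\right] \le 2\exp\!\left(-\frac{n t^2/2}{\sigma_m^2 + 2 B_m t/3}\right).\]
Setting the right-hand side equal to $\delta/d$ and solving for $t$ produces the two-term estimate $t = O\!\left(\sigma_m\sqrt{\log(d/\delta)/n} + B_m \log(d/\delta)/n\right)$; the first, sub-Gaussian term dictates the final rate, while the sub-exponential correction is lower order once $n$ is at least of order $\log(d/\delta)$, which is implicit in any non-trivial aggregation regime.

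It remains to plug in the optimal $m^\star$ from Proposition~\ref{prop:optm} and simplify $\sigma_{m^\star}$. Using the elementary inequality $\coth(x/2) = \frac{e^x+1}{e^x-1} \le \frac{2}{x} + \frac{x}{6}$ valid for $x>0$, one has $\sigma_m^2 \le \frac{d(\beta-\alpha)^2}{4m}\!\left(\frac{2m}{\epsilon}+\frac{\epsilon}{6m}\right)^{\!2}$. In the high-privacy regime $\epsilon \le 2.18$ targeted by Proposition~\ref{prop:optm}, we have $m^\star = 1$ and the leading term is $d(\beta-\alpha)^2/\epsilon^2$, so $\sigma_{m^\star} = O(\sqrt{d}/\epsilon)$. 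Feeding this back into the Bernstein bound gives the per-coordinate estimate $O\!\left(\sqrt{d\log(d/\delta)}/(\epsilon\sqrt{n})\right)$, and the union bound across the $d$ coordinates upgrades this pointwise control to the maximum stated in the proposition.

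The step I expect to be the main obstacle is the concentration choice. A naive Hoeffding argument uses only the deterministic range $B_m = \Theta(d/\epsilon)$ and therefore produces a bound of order $d\sqrt{\log(d/\delta)}/(\epsilon\sqrt{n})$, losing a factor of $\sqrt{d}$ against the target rate; recovering the $\sqrt{d}$ really requires the variance-aware inequality together with a careful verification that the sub-exponential remainder is dominated by the sub-Gaussian term in the regime of interest. The remaining pieces -- the $\coth$ expansion, the substitution of $m^\star$, and the union bound over $i$ -- are routine algebra.
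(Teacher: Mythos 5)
Your proposal follows essentially the same route as the paper's own proof: unbiasedness of the rectified features, a variance bound of order $md/\epsilon^2$ and a range bound of order $d/\epsilon$, Bernstein's inequality per coordinate, and a union bound over the $d$ dimensions with failure probability $\delta/d$ each. Your added remarks — that plugging in $m^\star$ removes the residual $m$-dependence and that a Hoeffding-only argument would lose a factor of $\sqrt{d}$ — are correct refinements of the same argument rather than a different approach, and your treatment of when the sub-exponential term is dominated is no less careful than the paper's, which also elides that condition.
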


The above proposition indicates that with the mean aggregator function (which can be extended to other \agg{} functions as well), the estimation error decreases with a rate proportional to the square root of the node's degree. Therefore, the higher number of neighbors, the lower the estimation error.
But as mentioned earlier, the size of $\N(v)$ is usually small in real graphs, which hinders the \agg{} function from driving out the injected noise on its own.

In a different context, prior works have shown that considering higher-order neighbors can help learn better node representations~\cite{morris2019weisfeiler, klicpera2019diffusion, pmlr-v97-abu-el-haija19a}. Inspired by these works, a potential solution to this issue is to expand the neighborhood of each node $v$ by considering more nodes that are not necessarily adjacent to $v$ but reside within an adjustable local neighborhood around $v$.
To this end, we use an efficient convolution layer, described in Algorithm~\ref{alg:kprop}, that can effectively be used to address the small-size neighborhood issue. The idea is simple: we aggregate features of those nodes that are up to $K$ steps away from $v$ by simply invoking the \agg{} function $K$ consecutive times, without any non-linear transformation in between. For simplicity, we call this algorithm KProp, as every node propagates its message to $K$ hops further.

As illustrated in Figure~\ref{fig:overview}, we prepend KProp as a denoising layer to the GNN. This approach has two advantages: first, it allows to use any GNN architecture with any \agg{} function for the backbone model, as KProp already uses a linear \agg{} that satisfies Corollary~\ref{cor:unbiased}; and second, it enables us to expand the effective aggregation set size for every node by controlling the step parameter $K$. However, it is essential to note that we cannot arbitrarily increase the neighborhood size around a node, since aggregating messages from too distant nodes could lead to over-smoothing of output vectors~\cite{li2018deeper}. Therefore, there is a trade-off between the KProp's denoising accuracy and the overall GNN's expressive power.

\begin{algorithm}[t]
	\small
	\SetKwInOut{Input}{Input}\SetKwInOut{Output}{Output}
	\LinesNumbered
	\Input{Graph $\mathcal{G}=(\mathcal{V}, \mathcal{E})$; input vector $\mathbf{x}_v, \forall v\in\mathcal{V}$; linear aggregator function \agg{}; step parameter $K\ge0$;
	}
	\Output{Embedding vector $\mathbf{h}_v, \forall v\in\mathcal{V}$}
	\DontPrintSemicolon
	\SetKwFor{ForPar}{for all}{do in parallel}{end}
	\caption{KProp Layer}
	\label{alg:kprop}

	\ForPar{$v \in \mathcal{V}$}{
	$\mathbf{h}^0_{\N(v)} = \mathbf{x}_v$\;
	\For{$k=1$ to $K$} {
	$\mathbf{h}^k_{\N(v)}= \agg{}\left(\{\mathbf{h}^{k-1}_{\N(u)}, \forall u \in \N(v) - \{v\}\}\right)$\;
	}
	$\mathbf{h}_v=\mathbf{h}^K_{\N(v)}$\;
	}
	\Return $\{\mathbf{h}_v, \forall v\in\mathcal{V}\}$\;
\end{algorithm}

It is worth mentioning that in KProp, we perform aggregations over $\mathcal{N}(v) - \{v\}$, i.e., we do not include self-loops. While it has been shown that adding self-loops can improve accuracy in conventional GNNs \cite{kipf2017semi}, excluding self-connections works better when dealing with noisy features. As $K$ grows, with self-loops, we account for the injected noise in the feature vector of each node in the $v$'s neighborhood multiple times in the aggregation. Therefore, removing self-loops helps to reduce the total noise by discarding repetitive node features from the aggregation.

\subsection{Learning with private labels}

In this last part, we describe the method used to perturb and collect labels privately and introduce our training algorithm for learning locally private GNNs using perturbed labels, called label denoising with propagation (Drop). Let $f(\mathbf{x}) = \arg\max_\mathbf{y}\hat p(\mathbf{y}\mid\mathbf{x})$ be the target node classifier, where $\hat p(\mathbf{y}\mid\mathbf{x}) = g(\mathbf{x},\mathcal{G};\mathbf{W})$ approximates the class-conditional probabilities $p(\mathbf{y}\mid\mathbf{x})$ and is modeled by a GNN $g(.)$ with the learnable weight matrix $\mathbf{W}$.
The goal is to optimize $\mathbf{W}$ such that $\hat p(\mathbf{y}\mid\mathbf{x})$ becomes as close as possible to $p(\mathbf{y}\mid\mathbf{x})$. In the standard setting, this is usually done by minimizing the cross-entropy loss function between $\hat p(\mathbf{y}\mid\mathbf{x})$ and true label $\mathbf{y}$ over the set of labeled nodes $\mathcal{V_L}$:
\begin{equation}
	\ell\left(\mathbf{y}, \hat p(\mathbf{y}\mid\mathbf{x})\right)=-\sum_{v\in\mathcal{V_L}}\mathbf{y}_v^T\log\hat p(\mathbf{y}\mid\mathbf{x_v})
\end{equation}

However, since the labels are considered private, each node $v\in\mathcal{V_L}$ that participates in the training procedure has to perturb their label $\mathbf{y}_v$ using some LDP mechanism, and send the perturbed label $\mathbf{y}^\prime_v$ to the server. Still, if we train the GNN using the perturbed labels by minimizing the cross-entropy loss between $\hat p(\mathbf{y}\mid\mathbf{x})$ and perturbed labels $\mathbf{y}^\prime$, namely $	\ell\left(\mathbf{y}^\prime, \hat p(\mathbf{y}\mid\mathbf{x})\right)$, the model completely overfits the noisy labels and generalizes poorly to unseen nodes. However, many real-world graphs, such as social networks, are homophilic \cite{mcpherson2001birds}, meaning that  nodes with structural similarity tend to have similar labels \cite{wang2020unifying}. We exploit this fact to estimate the frequency of the labels in a local neighborhood around any node $v$ to obtain its estimated label $\tilde{\mathbf{y}}_v$.
To this end, we can use any LDP frequency oracle, such as randomized response \cite{kairouz2016discrete}, Unary Encoding \cite{wang2017locally}, or Local Hashing \cite{wang2017locally}. In this paper, we use randomized response for two reasons: first, the number of classes is usually small, and randomized response has been shown to work better than other mechanisms in low dimensions \cite{wang2017locally}; and second, it introduces a symmetric, class-independent noise to the labels by flipping them according to the following distribution, which we later exploit in our learning algorithm:
\begin{equation}\label{eq:rr}
	p(\mathbf{y}^\prime \mid \mathbf{y}) = \begin{cases}
		\frac{e^\epsilon}{e^\epsilon + c - 1}, \quad \text{if}~\mathbf{y}^\prime=\mathbf{y} \\
		\frac{1}{e^\epsilon + c - 1}, \quad \text{otherwise}
	\end{cases}
\end{equation}
where $\mathbf{y}$ and $\mathbf{y}^\prime$ are clean and perturbed labels, respectively, $c$ is the number of classes, and $\epsilon$ is the privacy budget.

Similar to estimating the graph convolution with noisy features, we also face the problem of small-size neighborhood if we only rely on the first-order neighbors to estimate the label frequency. In order to expand the neighborhood around each node, we take the same approach as we did for features: we apply KProp on node labels, i.e., we set $\tilde{y}_v = \arg\max_{i\in[c]} h_i(\mathbf{y}^\prime_v, K_y)$ for all $v\in\mathcal{V_L}$, where $h(.)$ is the KProp function, $K_y$ is the step parameter, and $[c] = \{1,\dots,c\}$. With the mean aggregator function, at every iteration, KProp updates every node's label distribution by averaging its neighbors' label distribution. In this paper, however, we instead use the GCN aggregator function \cite{kipf2017semi}:
\begin{equation}\label{eq:gcn}
	\agg{}\left(\{\mathbf{y}_u^{\prime}, \forall u \in \mathcal{N}(v)\}\right)=\sum_{u\in\N(v)}\frac{\mathbf{y}^\prime_u}{\sqrt{|\N(u)|\cdot|\N(v)|}}
\end{equation}
Using the GCN aggregator leads to a lower estimation error than the mean aggregator due to the difference in their normalization factors, which affects their estimation variance. Specifically, the normalization factor in the GCN aggregator is $\sqrt{|\N(u)|\cdot|\N(v)|}$, while for the mean, it is $|\N(v)| = \sqrt{|\N(v)|\cdot|\N(v)|}$. In other words, the GCN aggregator considers the square root of the degree of both the central node $v$ and its neighbor $u$, whereas the mean aggregator considers only the square root of the central node $v$'s degree twice. Since there are many more low-degree nodes in many real graphs than high-degree ones, using the mean aggregator results in a small normalization factor for most nodes, leading to a higher estimation variance. But as many of the low-degree nodes are linked to the high-degree ones, the GCN aggregator balances the normalization by considering the degree of both link endpoints. Consequently, the normalization for many low-degree nodes increases compared to the mean aggregator, yielding a lower estimation variance.

As the step parameter $K_y$ gradually increases, the estimated label $\tilde{\mathbf{y}}$ becomes more similar to the clean label $\mathbf{y}$. Therefore, an initial idea for the training algorithm would be to learn the GNN using $\tilde{\mathbf{y}}$ instead of ${\mathbf{y}^\prime}$ by minimizing the cross-entropy loss between $\hat p(\mathbf{y}\mid\mathbf{x})$ and $\tilde{\mathbf{y}}$, namely $\ell\left( \tilde{\mathbf{y}}, \hat p(\mathbf{y}\mid\mathbf{x}) \right)$. However, this approach has two downsides. First, it causes the GNN to become a predictor for  $\tilde{\mathbf{y}}$ and not  ${\mathbf{y}}$. Although $\tilde{\mathbf{y}}$ tend to converge to ${\mathbf{y}}$ as $K_y$ increases, the output of KProp also becomes increasingly smoother, until the excessive KProp aggregations lead to over-smoothing, after which $\tilde{\mathbf{y}}$ will begin to diverge from ${\mathbf{y}}$ and become noisy again, while we are still fitting $\tilde{\mathbf{y}}$. Second, we cannot know how far we should increase $K_y$ to get the best accuracy and prevent over-smoothing without clean validation data. One way to validate the model with noisy labels is to calculate the accuracy of the target classifier $f(\mathbf{x})$ for predicting the estimated label $\tilde{\mathbf{y}}$. However, suppose the model overfits the over-smoothed labels. In that case, the corresponding validation $\tilde{\mathbf{y}}$'s also becomes over-smoothed and can be well predicted by the model, resulting in a high validation but low test accuracy.

To address the first issue, instead of minimizing $\ell\left( \tilde{\mathbf{y}}, \hat p(\mathbf{y}\mid\mathbf{x}) \right)$, we propose to minimize $\ell\left( \tilde{\mathbf{y}}, \hat p(\tilde{\mathbf{y}}\mid\mathbf{x}) \right)$, i.e., the cross-entropy loss between the estimated label $\tilde{\mathbf{y}}$ and its approximated probability $\hat p(\tilde{\mathbf{y}}\mid\mathbf{x})$, which can be obtained by applying the same procedure on $\hat p(\mathbf{y}\mid\mathbf{x})$ as we did on ${\mathbf{y}}$ to obtain $\tilde{\mathbf{y}}$. In the first place, we applied randomized response on ${\mathbf{y}}$ to obtain ${\mathbf{y}}^\prime$, and then passed the result to the KProp layer to get $\tilde{\mathbf{y}}$. If we go through the same steps to obtain $\hat p(\tilde{\mathbf{y}}\mid\mathbf{x})$ and then minimize its cross-entropy loss with $\tilde{\mathbf{y}}$, we can keep $\hat p({\mathbf{y}}\mid\mathbf{x})$ intact when KProp causes over-smoothing, and at the same time benefit from it's denoising capability. To this end, we first need to calculate $\hat p({\mathbf{y}}^\prime\mid\mathbf{x})$ from $\hat p({\mathbf{y}}\mid\mathbf{x})$:
\begin{equation}\label{eq:p_yp}
	\hat p({\mathbf{y}}^\prime\mid\mathbf{x}) = \sum_{\mathbf{y}} p({\mathbf{y}}^\prime\mid\mathbf{y})\cdot \hat p({\mathbf{y}}\mid\mathbf{x})
\end{equation}
where $p({\mathbf{y}}^\prime\mid\mathbf{y})$ is directly obtained from (\ref{eq:rr}). This step would be analogous to applying randomized response to ${\mathbf{y}}$ and getting ${\mathbf{y}}^\prime$. Finally, similar to applying KProp on ${\mathbf{y}}^\prime$ to get ${\tilde{\mathbf{y}}}$, we treat $\hat p({\mathbf{y}}^\prime\mid\mathbf{x})$ as soft labels and apply KProp with the same step parameter to approximate $\hat p(\tilde{\mathbf{y}}\mid\mathbf{x})$:
\begin{equation}\label{eq:p_yt}
	\hat p(\tilde{\mathbf{y}}\mid\mathbf{x}) = softmax\left(h\left(\hat p({\mathbf{y}}^\prime\mid\mathbf{x}), K_y\right)\right)
\end{equation}
where the softmax is used to normalize the KProp's output as a valid probability distribution. Finally, we train the model by minimizing $\ell\left( \tilde{\mathbf{y}}, \hat p(\tilde{\mathbf{y}}\mid\mathbf{x}) \right)$.

To address the validation issue, we must make sure that our validation procedure is not affected by the KProp step parameter $K_y$. Clearly, if we use $\tilde{\mathbf{y}}$ for validation, by changing $K_y$ we are also modifying estimated labels $\tilde{\mathbf{y}}$, and thus we are basically validating different models with different labels. Therefore, we should only validate the model using the noisy labels ${\mathbf{y}}^\prime$. Here, we choose the cross-entropy loss between ${\mathbf{y}}^\prime$ and $\hat p({\mathbf{y}}^\prime\mid\mathbf{x})$, namely $\ell\left( {\mathbf{y}}^\prime, \hat p({\mathbf{y}}^\prime\mid\mathbf{x}) \right)$, as Patrini \etal~\cite{patrini2017making} show that this loss function, which they call forward correction loss, is unbiased, meaning that under expected label noise, $\ell\left( {\mathbf{y}}^\prime, \hat p({\mathbf{y}}^\prime\mid\mathbf{x}) \right)$ is equal to $\ell\left( {\mathbf{y}}, \hat p({\mathbf{y}}\mid\mathbf{x}) \right)$, i.e., the original loss computed on clean data. Therefore, we train the GNN with different hyper-parameters, including $K_y$, and pick the one achieving the lowest forward correction loss.

While this is in principle a reasonable idea, the forward correction loss on its own is not enough to prevent overfitting. That's because when $K_y$ is small, the estimated label $\tilde{\mathbf{y}}$ is more similar to the noisy one ${\mathbf{y}}^\prime$ than the clean label ${\mathbf{y}}$, and thus the model overfits to the noisy labels. In this case, the GNN becomes a predictor for ${\mathbf{y}}^\prime$ rather than ${\mathbf{y}}$, yielding a small forward correction loss. This is an incorrect validation signal as it favors $K_y$ to be close to zero. To overcome this issue and detect overfitting to label noise, our approach is to look instead at the accuracy of the target classifier $f(\mathbf{x}) = \arg\max_\mathbf{y}\hat p(\mathbf{y}\mid\mathbf{x})$ for predicting the noisy labels $\mathbf{y}^\prime$. From the randomized response algorithm, we know that the probability of keeping the label is $\frac{e^\epsilon}{e^\epsilon + c - 1}$. This gives us an upper bound on the expected accuracy of a perfect classifier, i.e., the classifier with 100\% accuracy on predicting clean labels. In other words, a perfect classifier can predict ${\mathbf{y}}^\prime$ with an expected accuracy of at most $Acc^* = \frac{e^\epsilon}{e^\epsilon + c - 1}$. Therefore, if during training the model we get accuracy above $Acc^*$, either on the training or validation dataset, we can consider it a signal of overfitting to the noisy labels. More specifically, we train the GNN for a maximum of $T$ epochs and record the forward correction loss and the accuracy of the target classifier for predicting noisy labels over both training and validation sets at every epoch. At the end of training, we pick the model achieving the lowest forward correction loss such that their accuracy is at most $Acc^*$.

Putting all together, the pseudo-code of the LPGNN training algorithm with Drop is presented in Algorithm~\ref{alg:dpgnn}, where we use two different privacy budgets $\epsilon_x$ and $\epsilon_y$ for feature and label perturbation, respectively. The following corollary entails from our algorithm:
\begin{corollary}\label{cor:dp}
	Algorithm~\ref{alg:dpgnn} satisfies $(\epsilon_x+\epsilon_y)\text{-local}$ differential privacy for graph nodes.
\end{corollary}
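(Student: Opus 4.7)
The plan is to decompose Algorithm~\ref{alg:dpgnn} into the two user-side randomizations it applies to each user's private data and then invoke sequential composition together with the post-processing invariance of local differential privacy. For each user $v\in\mathcal{V}$ the private data consist of the feature vector $\mathbf{x}_v$ and, for $v\in\mathcal{V_L}$, the label $\mathbf{y}_v$; these are touched only through the multi-bit encoder on $\mathbf{x}_v$ with budget $\epsilon_x$ and the generalized randomized response on $\mathbf{y}_v$ with budget $\epsilon_y$, before being shipped to the server as the pair $(\mathbf{x}^*_v,\mathbf{y}^\prime_v)$.

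First I would establish each component's LDP guarantee separately. The feature-side claim is exactly Theorem~\ref{th:dp}, which already certifies that the multi-bit encoder is $\epsilon_x$-LDP. For the labels, I would read the likelihood ratio directly off~(\ref{eq:rr}): for any two possible label inputs and any observable output, the worst-case ratio equals $\frac{e^{\epsilon_y}/(e^{\epsilon_y}+c-1)}{1/(e^{\epsilon_y}+c-1)} = e^{\epsilon_y}$, which is the standard $\epsilon_y$-LDP guarantee of generalized randomized response (Kairouz~\etal).

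With these two facts in hand, I would apply sequential composition to the two independent mechanisms released by user $v$, obtaining that the per-user release $(\mathbf{x}^*_v,\mathbf{y}^\prime_v)$ satisfies $(\epsilon_x+\epsilon_y)$-LDP. Everything the server then does downstream---multi-bit rectification via~(\ref{eq:est}), the KProp aggregations on features and on soft labels, the forward and backward passes of the GNN, optimization of $\ell(\tilde{\mathbf{y}},\hat p(\tilde{\mathbf{y}}\mid\mathbf{x}))$ using the flip matrix from~(\ref{eq:p_yp})--(\ref{eq:p_yt}), and model selection via the forward correction loss subject to the $Acc^*$ threshold---operates entirely on already-perturbed inputs, so the post-processing invariance of LDP preserves the $(\epsilon_x+\epsilon_y)$ bound with no additional budget cost.

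The argument is structurally routine and I do not expect a deep technical obstacle; the main subtlety to audit is simply that no other part of the pipeline constitutes a fresh release of private node data. Since the problem setup declares $\mathcal{V}$, $\mathcal{E}$, and the $\mathcal{V_L}/\mathcal{V_U}$ partition to be server-visible, none of these draws from the privacy budget, and the only privacy-consuming touches of $\mathbf{x}_v$ and $\mathbf{y}_v$ anywhere in Algorithm~\ref{alg:dpgnn} are the two LDP mechanisms enumerated above, which justifies invoking composition exactly once per user.
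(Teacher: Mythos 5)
Your proposal is correct and follows essentially the same route as the paper's own proof: establish that the multi-bit encoder and randomized response are $\epsilon_x$-LDP and $\epsilon_y$-LDP respectively, invoke basic sequential composition for the per-user release, and then appeal to post-processing invariance for everything the server computes downstream. The only addition is your explicit verification of the $e^{\epsilon_y}$ likelihood ratio for randomized response, which the paper takes for granted by citation; this is a harmless elaboration rather than a different argument.
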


Corollary~\ref{cor:dp} shows that the entire training procedure is LDP due to the robustness of differential privacy to post-processing~\cite{dwork2014algorithmic}. Furthermore, any prediction performed by the LPGNN is again subject to the post-processing theorem~\cite{dwork2014algorithmic}, and therefore, satisfies LDP for the nodes, as the LDP mechanism is applied to the private data only once.

\begin{algorithm}[t]
	\small
	\SetKwInOut{Input}{Input}\SetKwInOut{Output}{Output}
	\SetKwRepeat{Do}{do}{while}%
	\SetKwFor{ForPar}{for all}{do in parallel}{end}
	\LinesNumbered
	\Input{Graph $\mathcal{G}=(\mathcal{V_L}\cup\mathcal{V_U}, \mathcal{E})$; GNN model $g(\mathbf{x},\mathcal{G};\mathbf{W})$; KProp layer $h(\mathbf{x},\mathcal{G}; K)$; KProp step parameter for features $K_x\ge0$; KProp step parameter for labels $K_y\ge0$; privacy budget for feature perturbation $\epsilon_x > 0$; privacy budget for label perturbation $\epsilon_y > 0$; range parameters $\alpha$ and $\beta$; number of classes $c$; maximum number of epochs $T$; learning rate $\eta$;}
	\Output{Trained GNN weights $\mathbf{W}$}
	\BlankLine
	\DontPrintSemicolon
	\caption{Locally Private GNN Training with Drop}
	\label{alg:dpgnn}
	\textbf{Server-side:}\;
	$\mathcal{V}\gets\mathcal{V_L}\cup\mathcal{V_U}$\;
	Send $\epsilon_x$, $\epsilon_y$, $\alpha$, and $\beta$ to every node $v\in\mathcal{V}$.\;
	\BlankLine
	\textbf{Node-side:}\;
	Obtain a perturbed vector $\mathbf{x}^*$ by Algorithm~\ref{alg:mbm}.\;
	\eIf{current node is in $\mathcal{V_L}$}{
		Obtain a perturbed label $\mathbf{y}^\prime$ by (\ref{eq:rr}).\;
	}{
		$\mathbf{y}^\prime \gets \vec{\mathbf{0}}$\;
	}
	Send $(\mathbf{x}^*, \mathbf{y}^\prime)$ to the server.\;
	\BlankLine
	\textbf{Server-side:}\;
	Obtain $\mathbf{x}^\prime_v$ using (\ref{eq:est}) for all $v\in\mathcal{V}$.\;
	$\mathbf{h}_v\gets h(\mathbf{x}^\prime_v,\mathcal{G}; K_x)$ for all $v\in\mathcal{V}$.\;
	$\tilde{\mathbf{y}}_v\gets h(\mathbf{y}^\prime_v,\mathcal{G}; K_y)$ for all $v\in\mathcal{V_L}$.\;
	Partition $\mathcal{V_L}$ into train and validation sets $\mathcal{V_L}^{tr}$ and $\mathcal{V_L}^{val}$.\;
	$Acc^* \gets {e^{\epsilon_y}}/{(e^{\epsilon_y} + c - 1)}$\;
	\For{$t\in\{1,\dots,T\}$} {
	\ForPar{$v\in\mathcal{V_L}$}{
		$\hat p(\mathbf{y}\mid\mathbf{x}_v) \gets g(\mathbf{h}_v, \mathcal{G};\mathbf{W})$\;
		Obtain $\hat p(\mathbf{y}^\prime\mid\mathbf{x}_v)$ using (\ref{eq:p_yp})\;
		Obtain $\hat p(\tilde{\mathbf{y}}\mid\mathbf{x}_v)$ using (\ref{eq:p_yt})\;
	}
	$\mathbf{W}^{t+1}\gets\mathbf{W}^{t}-\eta\nabla\sum_{v\in\mathcal{V_L}^{tr}}\ell\left( \tilde{\mathbf{y}}_v, \hat p(\tilde{\mathbf{y}}\mid\mathbf{x}_v) \right)$\;
	$\ell^t_{val} \gets \sum_{v\in\mathcal{V_L}^{val}}\ell\left( {\mathbf{y}}_v^\prime, \hat p({\mathbf{y}}^\prime\mid\mathbf{x}_v) \right)$\;
	$Acc^t_{val} \gets\frac{1}{|\mathcal{V_L}^{val}|}\sum_{v\in\mathcal{V_L}^{val}} Accuracy(\hat p(\mathbf{y}\mid\mathbf{x}_v), {\mathbf{y}}_v^\prime)$\;
	$Acc^t_{tr} \gets \frac{1}{|\mathcal{V_L}^{tr}|}\sum_{v\in\mathcal{V_L}^{tr}} Accuracy(\hat p(\mathbf{y}\mid\mathbf{x}_v), {\mathbf{y}}_v^\prime)$\;
	}
	$t\gets\arg\min_t\ell^t_{val}$ such that $Acc^t_{tr} \le Acc^*$ and $Acc^t_{val} \le Acc^*$\;
	\Return  $\mathbf{W}^t$\;
\end{algorithm}

\section{Experiments}\label{sec:xp}

We conduct extensive experiments to assess the privacy-utility performance of the proposed method for the node classification task and evaluate it under different parameter settings that can affect its effectiveness.

\subsection{Experimental settings}

\paragraph{\textbf{Datasets.}}
We used two different sets of publicly available real-world datasets: two citation networks, Cora and Pubmed~\cite{yang2016revisiting}, which have a lower average degree, and two social networks, Facebook \cite{rozemberczki2019multi}, and LastFM \cite{feather} that have a higher average degree. The description of the datasets is as followed:
\begin{itemize}[leftmargin=*]
	\item{{\emph{Cora and Pubmed~\cite{yang2016revisiting}:}}} These are well-known citation network datasets, where each node represents a document and edges denote citation links. Each node has a bag-of-words feature vector and a label indicating its category.
	\item{{\emph{Facebook~\cite{rozemberczki2019multi}:}}} This dataset is a page-page graph of verified Facebook sites. Nodes are official Facebook pages, and edges correspond to mutual likes between them. Node features are extracted from the site descriptions, and the labels denote site category.
	\item{{\emph{LastFM~\cite{feather}:}}} This social network is collected from the music streaming service LastFM. Nodes denote users from Asian countries, and links correspond to friendships. The task is to predict the home country of a user given the artists liked by them. Since the original dataset was highly imbalanced, we limited the classes to the top-10 having the most samples.
\end{itemize}
Summary statistics of the datasets are provided in Table~\ref{tab:dataset}.

\begin{figure*}[t]
	\newcommand{\plot}[3][]{
		\resizebox{\boxsize}{!}{
			\begin{tikzpicture}[trim axis left, trim axis right, every mark/.append style={mark size=2pt}]
				\begin{groupplot}[
					group style={group size=1 by 3, vertical sep=1cm},
					footnotesize,
					width=\figwidth,
					legend pos=south east,
					legend style={
							nodes={scale=0.9, transform shape},
						},
					legend entries={$\epsilon_y=1$, $\epsilon_y=2$, $\epsilon_y=3$, $\epsilon_y=\infty$},
					xlabel=$\epsilon_x$,
					ylabel=Accuracy (\%),
					ylabel shift = -4 pt,
					xlabel shift = -3 pt,
					ymin=0,ymax=100,
					ymajorgrids,
					symbolic x coords={0.01, 0.1, 1.0, 2.0, 3.0, inf},
					xtick=={0.01, 0.1, 1.0, 2.0, 3.0, $\infty$},
					xtick style={draw=none},
					ytick style={draw=none},
					]
					\pgfplotsinvokeforeach{gcn, gat, sage} {
						\nextgroupplot[
							title=\textbf{\map{##1}},
							title style={at={(-0.6,0.4)},anchor=north,rotate=90, color=white},
							#1]
						\foreach \yeps in {1.0, 2.0, 3.0, inf} {
								\addplot+[thick, only marks, error bars/.cd, y dir=both, y explicit]
								table[x=x_eps, y=test/acc_mean, y error=test/acc_ci, col sep=comma] {figures/xyeps/model=##1,dataset=#3,y_eps=\yeps.csv};
							}
					}
				\end{groupplot}
			\end{tikzpicture}
		}
	}

	\centering
	\subfloat[Cora]{\plot[title style={color=black}]{}{cora}}\hfil
	\subfloat[Pubmed]{\plot{}{pubmed}}\hfil
	\subfloat[Facebook]{\plot{}{facebook}}\hfil
	\subfloat[LastFM]{\plot{}{lastfm}}
	\caption{Comparison of LPGNN's performance with different GNN model under varying feature and label privacy budgets.}
	\label{fig:eps}
\end{figure*}

\begin{table}[t]
	\centering
	\small
	\caption{Descriptive statistics of the used datasets}
	\label{tab:dataset}
	\sc
	\begin{tabu} to \columnwidth {X[l] X[c] X[c] X[c] X[c] c}
		\toprule
		Dataset  & \#Classes & \#Nodes & \#Edges & \#Features & Avg. Deg. \\
		\midrule
		Cora     & 7         & 2,708   & 5,278   & 1,433      & 3.90      \\
		Pubmed   & 3         & 19,717  & 44,324  & 500        & 4.50      \\
		Facebook & 4         & 22,470  & 170,912 & 4,714      & 15.21     \\
		LastFM   & 10        & 7,083   & 25,814  & 7,842      & 7.29      \\
		\bottomrule
	\end{tabu}
\end{table}

\paragraph{\textbf{Experiment setup.}}
For all the datasets, we randomly split nodes into training, validation, and test sets with 50/25/25\% ratios, respectively. Without loss of generality, we normalized the node features of all the datasets between zero and one\footnote{Note that this normalization step does not affect the privacy, as the range parameters $(\alpha, \beta)$ are known to both the server and users, so the server could ask users to normalize their data between 0 and 1 before applying the multi-bit encoder.}, so in all cases, we have ${\alpha}=0$ and ${\beta}=1$.
LDP feature perturbation is applied to the features of all the training, validation, and test sets. However, label perturbation is only applied to the training and validation sets, and the test set's labels are left clean for performance testing.
We tried three state-of-the-art GNN architectures, namely GCN \cite{kipf2017semi}, GAT \cite{velivckovic2017graph}, and GraphSAGE \cite{hamilton2017inductive}, as the backbone model for LPGNN, with GraphSAGE being the default model for ablation studies.
All the GNN models have two graph convolution layers with a hidden dimension of size 16 and the SeLU activation function~\cite{klambauer2017self} followed by dropout, and the GAT model has four attention heads. For both feature and label KProps, we use GCN aggregator function.
We optimized the hyper-parameters of LPGNN based on the validation loss of GraphSAGE using the Drop algorithm as described in Section~\ref{sec:method} with the following strategy, and used the same values for other GNN models: First, we fix $K_x$ and $K_y$ to $(16, 8)$, $(16, 2)$, $(4, 2)$, and $(8, 2)$, on Cora, Pubmed, Facebook, and  LastFM, respectively, and for every pair of privacy budgets $(\epsilon_x, \epsilon_y)$ in $(1,1)$, $(1,\infty)$, $(\infty,1)$, and $(\infty,\infty)$, we perform a grid search to find the best choices for initial learning rate and weight decay both from $\{10^{-4}, 10^{-3}, 10^{-2}\}$ and dropout rate from $\{10^{-4}, 10^{-3}, 10^{-2}\}$. Second, we fix the best found hyper-parameters in the previous step and search for the best performing KProp step parameters $K_x$ and $K_y$ both within $\{0,2,4,8,16\}$ for all $\epsilon_x\in\{0.01, 0.1, 1, 2, 3, \infty\}$ and $\epsilon_y\in\{0.5, 1, 2, 3, \infty\}$. More specifically, for every $\epsilon_x$ (resp. $\epsilon_y$) except $\infty$, we use the best learning rate, weight decay, and dropout rate found for $\epsilon_x=1$ in the previous step (resp. $\epsilon_y=1$) to search for the best KProp step parameters.
All the models are trained using the Adam optimizer~\cite{kingma2014adam} over a maximum of 500 epochs, and the best model is picked for testing based on the validation loss. We measured the accuracy on the test set over 10 consecutive runs and report the average and 95\% confidence interval calculated by bootstrapping with 1000 samples. Our implementation is available at \href{https://github.com/sisaman/LPGNN}{https://github.com/sisaman/LPGNN}.


\subsection{Experimental results}

\paragraph{\textbf{Analyzing the utility-privacy trade-off.}}
We first evaluate how our privacy-preserving LPGNN method performs under varying feature and label privacy budgets. We changed the feature privacy budget $\epsilon_x$ in $\{0.01, 0.1, 1, 2, 3, \infty\}$ and the label privacy budget within $\{1,2,3,\infty\}$. The cases where $\epsilon_x=\infty$ or $\epsilon_y=\infty$, are provided for comparison with non-private baselines, where we did not apply the corresponding LDP mechanism (multi-bit for features and randomized response for labels) and directly used the clean (non-perturbed) values. We performed this experiment using GCN, GAT, and GraphSAGE as different backbone GNN models and reported the node-classification accuracy, as illustrated in Figure~\ref{fig:eps}. 

We can observe that all the three GNN models demonstrate robustness to the perturbations, especially on features, and perform comparably to the non-private baselines. For instance, on the Cora dataset, both GCN and GraphSAGE could get an accuracy of about 80\%  at $\epsilon_x=0.1$ and $\epsilon_y=2$, which is only 6\% lower than the non-private ($\epsilon=\infty$) method. On the other three datasets, we can decrease $\epsilon_x$ to 0.01 and $\epsilon_y$ to 1, and still get less than 10\% accuracy loss compared to the non-private baseline. We believe that this is a very promising result, especially for a locally private model perturbing hundreds of features with a low privacy loss. This result shows that different components of LPGNN, from multi-bit mechanism to KProp, and the Drop algorithm are fitting well together.

According to the results, the GAT model slightly falls behind GCN and GraphSAGE in terms of accuracy-privacy trade-off, especially at high-privacy regimes $\epsilon_x\le1$, which is mainly due to its stronger dependence on the node features. Unlike the other two models, GAT uses node features at each layer to learn attention coefficients first, which are then used to weight different neighbors in the neighborhood aggregation. This property of GAT justifies its sensitivity to the features, and thus it degrades more than the other two models when the features are highly noisy. On the contrary, a model like GCN only uses node features in the GCN aggregator function (Eq.~\ref{eq:gcn}) and thus can better tolerate the noisy features. GraphSAGE averages neighboring node features and then appends the self feature vector to the aggregation, and therefore it is not as dependent as GAT on the features. Nevertheless, GAT could also achieve comparable results for $\epsilon_x\ge1$ on all the datasets.

\paragraph{\textbf{Analyzing the multi-bit mechanism.}}

In Table~\ref{tab:mechanisms}, we compared the performance of our multi-bit mechanism (denoted as MB) against 1-bit mechanism (1B), Laplace mechanism (LP), and Analytic Gaussian mechanism (AG) \cite{balle2018improving}. The 1-bit mechanism \cite{ding2017collecting}, is obtained by setting $m=d$ in Algorithm~\ref{alg:mbm}. The Laplace and Gaussian mechanisms are two classic mechanisms that respectively add a zero-mean Laplace and Gaussian noise to the data with a noise variance calibrated based on the privacy budget, and are widely used for both single value and multidimensional data perturbation. Note that the Gaussian mechanism satisfies a relaxed version of \ldp, namely $(\epsilon,\delta)\text{-LDP}$, which (loosely speaking) means that it satisfies \ldp with probability at least $1-\delta$ for $\delta > 0$. Here, we use the Analytic Gaussian mechanism~\cite{balle2018improving}, the optimized version of the standard Gaussian mechanism, with $\delta=1^{-10}$. As all these mechanisms are used for feature perturbation, we set the label privacy budget $\epsilon_y=\infty$ and only consider their performance under different $\epsilon_x\in\{0.01, 0.1, 1, 2\}$. According to the results, our multi-bit mechanism consistently outperforms the other mechanisms in classification accuracy almost in all cases, especially under smaller privacy budgets. For instance, at $\epsilon_x=0.01$, MB performs over 8\%, 2\%, 7\%, and 12\% better than the second-best mechanism AG on Cora, Pubmed, Facebook, and LastFM, respectively. This is mainly because the variance of our optimized multi-bit mechanism is lower than the other three, resulting in a more accurate estimation. Simultaneously, our mechanism is also efficient in terms of the communication overhead, requiring only two bits per feature. In contrast, the Gaussian mechanism's output is real-valued, usually taking 32 bits per feature (more or less, depending on the precision) to transmit a floating-point number.

To verify that using node features in a privacy-preserving manner has an added value in practice, in Table~\ref{tab:features}, we compare our multi-bit features with several ad-hoc feature vectors that can be used instead of the private features to train the GNN without any additional privacy cost.
\textsc{Ones} is the all-one feature vector, \textsc{Ohd} is the one-hot encoding of the node's degree, as in \cite{xu2018powerful}, and
\textsc{Rnd} is randomly initialized node features between 0 and 1.
To have a fair comparison, we set the feature dimension of all the methods
equal to the private features. We set $\epsilon_y=1$ and compare the LPGNN's result with multi-bit encoded features under $\epsilon_x\in\{0.01, 0.1, 1\}$. We observe that LPGNN, with the multi-bit mechanism, even under the minimum privacy budget of 0.01, significantly outperforms the ad-hoc baselines in all cases, with an improvement ranging from around 7\% on Facebook to over 20\% on Pubmed comparing to the best performing ad-hoc baseline. Note that even though perturbed features under very small $\epsilon_x$ are noisier and becomes similar to \textsc{Rnd}, with the help of KProp, the resulting aggregation could estimate -- even if poorly -- the true aggregation, which might be enough for the GNN to distinguish between different neighborhoods. But in the case of \textsc{Rnd}, the aggregations carry no information about neighborhoods as the features are random, so the accuracy is worse than the multi-bit mechanism. This result shows that node features are also effective in addition to the graph structure, and we cannot ignore their utility.

\begin{table}[t]
	\centering
	\caption{Accuracy of LPGNN with different \\ LDP mechanisms (${\epsilon_y=\infty}$)}
	\label{tab:mechanisms}
	\small
	\sc
	\begin{tabu} to \linewidth {X[l] X[c] c c c c}
		\toprule
		Dataset  & Mech. & $\epsilon_x=0.01$       & $\epsilon_x=0.1$        & $\epsilon_x=1$          & $\epsilon_x=2$          \\
		\midrule
		Cora     & 1b    & 45.8 $\pm$ 3.3          & 62.3 $\pm$ 1.5          & 59.9 $\pm$ 2.7          & 58.5 $\pm$ 2.9          \\
		         & lp    & 43.2 $\pm$ 3.1          & 57.8 $\pm$ 2.3          & 61.9 $\pm$ 3.1          & 58.1 $\pm$ 2.1          \\
		         & ag    & 59.7 $\pm$ 2.3          & 62.7 $\pm$ 2.8          & 67.5 $\pm$ 3.0          & 77.2 $\pm$ 1.9          \\
		         & mb    & \textbf{68.0 $\pm$ 2.9} & \textbf{64.6 $\pm$ 3.2} & \textbf{83.9 $\pm$ 0.4} & \textbf{84.0 $\pm$ 0.3} \\
		\midrule
		Pubmed   & 1b    & 76.2 $\pm$ 0.6          & 74.8 $\pm$ 0.7          & 81.8 $\pm$ 0.4          & 82.5 $\pm$ 0.2          \\
		         & lp    & 76.6 $\pm$ 0.5          & 75.2 $\pm$ 1.0          & 81.9 $\pm$ 0.4          & 82.4 $\pm$ 0.2          \\
		         & ag    & 76.4 $\pm$ 0.6          & 81.5 $\pm$ 0.3          & 82.9 $\pm$ 0.2          & \textbf{83.1 $\pm$ 0.2} \\
		         & mb    & \textbf{78.9 $\pm$ 0.7} & \textbf{82.7 $\pm$ 0.2} & \textbf{82.9 $\pm$ 0.2} & 82.9 $\pm$ 0.1          \\
		\midrule
		Facebook & 1b    & 57.0 $\pm$ 3.4          & 76.3 $\pm$ 1.6          & 86.1 $\pm$ 0.6          & 84.0 $\pm$ 1.3          \\
		         & lp    & 54.2 $\pm$ 2.9          & 72.5 $\pm$ 2.1          & 85.4 $\pm$ 0.4          & 84.8 $\pm$ 1.6          \\
		         & ag    & 78.2 $\pm$ 1.4          & 85.6 $\pm$ 0.7          & 92.0 $\pm$ 0.1          & 92.4 $\pm$ 0.2          \\
		         & mb    & \textbf{85.8 $\pm$ 0.4} & \textbf{91.0 $\pm$ 0.4} & \textbf{92.7 $\pm$ 0.1} & \textbf{92.9 $\pm$ 0.1} \\
		\midrule
		LastFm   & 1b    & 40.5 $\pm$ 7.4          & 56.2 $\pm$ 2.1          & 75.5 $\pm$ 2.5          & 68.1 $\pm$ 4.1          \\
		         & lp    & 43.4 $\pm$ 5.7          & 50.5 $\pm$ 2.7          & 73.1 $\pm$ 2.9          & 67.2 $\pm$ 6.7          \\
		         & ag    & 63.6 $\pm$ 2.4          & 75.1 $\pm$ 1.9          & 67.7 $\pm$ 4.2          & 63.5 $\pm$ 4.6          \\
		         & mb    & \textbf{75.6 $\pm$ 1.6} & \textbf{85.3 $\pm$ 0.4} & \textbf{84.9 $\pm$ 0.8} & \textbf{85.9 $\pm$ 1.1} \\
		\bottomrule
	\end{tabu}
\end{table}

\paragraph{\textbf{Analyzing the effect of KProp.}}
In this experiment, we investigate whether the KProp layer can effectively gain performance boost, for either node feature or labels. For this purpose, we varied the KProp's step parameters $K_x$ and $K_y$ both within $\{0,2,4,8,16\}$, and trained the LPGNN model under varying privacy budget, whose result is depicted in Figure~\ref{fig:kprop}. In the top row of the figure, we change $K_x\in\{0,2,4,8,16\}$ and $\epsilon_x\in\{0.01, 0.1, 1\}$, while fixing $\epsilon_y$=1 and selecting the best values for $K_y$ based on the validation loss. Conversely, in the bottom row, we vary $K_y\in\{0,2,4,8,16\}$ and $\epsilon_y\in\{0.5, 1, 2\}$, and set the best $K_x$ at $\epsilon_x=1$.

We observe that in all cases, both the feature and the label KProp layers are effective and can significantly boost the accuracy of the LPGNN depending on the dataset and the value of the corresponding privacy budget.
Based on the results, the accuracy of LPGNN rises to an extent by increasing the step parameters, which shows that the model can benefit from larger population sizes to have a better estimation for both graph convolution and labels. Furthermore, we see that the maximum performance gain is different across the datasets and privacy budgets. As the estimates become more accurate due to an increase in the privacy budget, we see that KProp becomes less effective, mainly due to over-smoothing. But at lower privacy budgets, KProp usually achieves the highest relative accuracy gain.

In the case of feature KProp, the performance is also correlated to the average node degree. For instance, at $\epsilon_x=0.01$, on the social network datasets with a higher average degree, the accuracy gain is around 6\% and 10\% on Facebook and LastFM, respectively, while on lower-degree citation networks, it is over 20\% on both Cora and Pubmed, which suggests that lower-degree datasets can benefit more from KProp. Furthermore, the optimal step parameter $K_x$ that yields the best result also depends on the average degree of the graph. For example, we see that the trend is more or less increasing until the end for citation networks with a lower average degree. In contrast, the accuracy begins to fall over higher-degree social networks after $K_x=4$. This means that in lower-degree datasets, KProp requires more steps to reach the sufficient number of nodes for aggregation, while on higher-degree graphs, it can achieve this number in fewer steps.

Regarding the label KProp, the performance growth depends not only on the average degree, but also on the number of classes, which can significantly affect the accuracy of randomized response. For instance, despite its high average degree, KProp could increase the accuracy on LastFM with 10 classes by over 20\% at $\epsilon_y=0.5$, while on the other high-degree dataset, Facebook, which has 4 classes, this number is at most 5\%. Low-degree datasets still can benefit much from label KProp, with both Cora and Pubmed achieving a maximum of 30\% accuracy boost at $\epsilon_y=1$ and $\epsilon_y=0.5$, respectively.

\begin{table}[t]
	\centering
	\caption{Accuracy of LPGNN with different features \\ ($\epsilon_y=1$)}
	\label{tab:features}
	\small
	\sc
	\begin{tabu} to \columnwidth {l X[c] X[c] X[c] X[c]}
		\toprule
		Feature                 & Cora           & Pubmed         & Facebook       & LastFm         \\
		\midrule
		Ones                    & 22.6 $\pm$ 5.0 & 38.9 $\pm$ 0.4 & 29.0 $\pm$ 1.4 & 19.6 $\pm$ 1.8 \\
		Ohd                     & 44.4 $\pm$ 3.5 & 52.5 $\pm$ 5.7 & 77.2 $\pm$ 0.3 & 66.4 $\pm$ 1.6 \\
		Rnd                     & 26.4 $\pm$ 3.0 & 56.0 $\pm$ 1.3 & 35.2 $\pm$ 5.6 & 32.3 $\pm$ 6.3 \\
		\midrule
		Mbm ($\epsilon_x=0.01$) & 63.0 $\pm$ 4.1 & 78.9 $\pm$ 0.2 & 85.0 $\pm$ 0.4 & 76.9 $\pm$ 4.3 \\
		Mbm ($\epsilon_x=0.1$)  & 62.4 $\pm$ 2.0 & 76.5 $\pm$ 0.4 & 85.1 $\pm$ 0.2 & 81.2 $\pm$ 1.3 \\
		Mbm ($\epsilon_x=1$)    & 69.3 $\pm$ 1.2 & 74.9 $\pm$ 0.3 & 84.9 $\pm$ 0.2 & 82.1 $\pm$ 1.0 \\
		\bottomrule
	\end{tabu}
\end{table}

\begin{figure*}[t]
	\newcommand{\plot}[3][]{
		\resizebox{\boxsize}{!}{
			\begin{tikzpicture}[trim axis left, trim axis right, every mark/.append style={mark size=2pt}]
				\begin{groupplot}[
						group style={group size=1 by 2, vertical sep=1.5cm},
						footnotesize,
						width=\figwidth,
						legend pos=south east,
						legend style={at={(0.5,1.3)}, anchor=north,legend columns=-1,
								draw=none, nodes={scale=0.9, transform shape}},
						ylabel=Accuracy (\%),
						ylabel shift = -4 pt,
						xlabel shift = -3 pt,
						ymajorgrids,
						symbolic x coords={0, 2, 4, 8, 16},
						xtick=data,
						xtick style={draw=none},
						ytick style={draw=none},
					]

					\nextgroupplot[
						xlabel=$K_x$,
						legend entries={$\epsilon_x=0.01$, $\epsilon_x=0.1$, $\epsilon_x=1$},
						#1
					]
					\foreach \xeps in {0.01, 0.1, 1.0} {
							\addplot+[thick, error bars/.cd, y dir=both, y explicit]
							table[x=x_steps, y=test/acc_mean, y error=test/acc_ci, col sep=comma] {figures/kprop/dataset=#3,x_eps=\xeps.csv};
						}

					\nextgroupplot[
						xlabel=$K_y$,
						legend entries={$\epsilon_y=0.5$, $\epsilon_y=1.0$, $\epsilon_y=2.0$},
						#2
					]

					\foreach \yeps in {0.5, 1.0, 2.0} {
							\addplot+[thick, error bars/.cd, y dir=both, y explicit]
							table[x=y_steps, y=test/acc_mean, y error=test/acc_ci, col sep=comma] {figures/kprop/dataset=#3,y_eps=\yeps.csv};

						}
				\end{groupplot}
			\end{tikzpicture}
		}
	}

	\centering
	\subfloat[Cora]{\plot[]{}{cora}}\hfil
	\subfloat[Pubmed]{\plot{}{pubmed}}\hfil
	\subfloat[Facebook]{\plot{}{facebook}}\hfil
	\subfloat[LastFM]{\plot{}{lastfm}}
	\caption{Effect of the KProp step parameter on the performance of LPGNN. The top row depicts the effect of feature KProp with $\epsilon_y=1$. The bottom row shows the effect of label KProp with $\epsilon_x=1$. The y-axis is not set to zero to focus on the trends.}
	\label{fig:kprop}
\end{figure*}

\paragraph{\textbf{Investigating the Drop algorithm.}}
In this final experiment, we investigate how using the Drop algorithm can affect the performance of LPGNN under different feature privacy budgets $\epsilon_y$ ranging with $\{0.5, 1.0, 2.0\}$, fixing $\epsilon_x=1$. We compare the result of Drop with the classic cross-entropy, where we directly train the GNN with noisy labels. We also compare with the forward correction method \cite{patrini2017making}, described in Section~\ref{sec:method}. Note that since our method does not rely on any clean validation data and is tailored for GNNs, it is not directly comparable to other general methods for deep learning with noisy labels that do not have these two characteristics. Table~\ref{tab:denoising} presents the accuracy of different learning algorithms for the three label privacy budgets. It is evident that our Drop algorithm substantially outperforms the other two methods and can remarkably increase the final accuracy compared to the baselines, especially at high-privacy regimes with severe label noise, and also on datasets like LastFM with a high number of classes.
Specifically, at $\epsilon_y=0.5$, using Drop improves the accuracy of LPGNN by over 24\%, 31\%, 6\%, and 25\% on Cora, Pubmed, Facebook, and LastFM, respectively, compared to the forward correction method. As $\epsilon_y$ increases to 2, the labels become less noisy, so the accuracy difference between Drop and the other baselines shrinks. Still, Drop can perform better or at least equally compared to the forward correction method.
This result suggests that Drop can effectively utilize the information within the graph structure to recover the actual node labels, and more importantly, it can achieve high accuracy without using any clean labels for model validation, e.g., for early stopping or hyper-parameter optimization.

\begin{table}[t]
	\centering
	\caption{Effect of Drop on the accuracy of LPGNN ($\epsilon_x=1$)}
	\label{tab:denoising}
	\small
	\sc
	\begin{tabu} to \columnwidth {l c X[c] X[c] X[c]}
		\toprule
		Dataset  & $\epsilon_y$ & Cross Entropy  & Forward Correction & Drop                     \\
		\midrule
		Cora     & 0.5          & 18.6 $\pm$ 1.3 & 18.6 $\pm$ 2.5     & \textbf{42.9 $\pm$ 1.5}  \\
		         & 1.0          & 25.5 $\pm$ 1.7 & 37.1 $\pm$ 2.5     & \textbf{69.3 $\pm$ 1.2}  \\
		         & 2.0          & 52.9 $\pm$ 2.1 & 75.1 $\pm$ 1.0     & \textbf{78.4 $\pm$ 0.7}  \\
		\midrule
		Pubmed   & 0.5          & 37.1 $\pm$ 0.9 & 38.7 $\pm$ 1.4     & \textbf{69.8 $\pm$ 0.7}  \\
		         & 1.0          & 65.4 $\pm$ 0.6 & 68.8 $\pm$ 0.7     & \textbf{74.9 $\pm$ 0.3}  \\
		         & 2.0          & 80.5 $\pm$ 0.2 & 81.0 $\pm$ 0.2     & \textbf{81.0 $\pm$ 0.2}  \\
		\midrule
		Facebook & 0.5          & 50.9 $\pm$ 4.2 & 68.9 $\pm$ 1.3     & \textbf{75.1 $\pm$ 0.6}  \\
		         & 1.0          & 55.2 $\pm$ 1.3 & 73.8 $\pm$ 1.1     & \textbf{84.9 $\pm$ 0.2}  \\
		         & 2.0          & 81.6 $\pm$ 1.2 & 88.9 $\pm$ 0.2     & \textbf{90.7 $\pm$ 0.1}  \\
		\midrule
		LastFm   & 0.5          & 21.1 $\pm$ 4.6 & 44.9 $\pm$ 5.3     & \textbf{70.0 $\pm$ 3.0 } \\
		         & 1.0          & 28.4 $\pm$ 2.5 & 58.5 $\pm$ 3.6     & \textbf{82.1 $\pm$ 1.0}  \\
		         & 2.0          & 56.8 $\pm$ 2.8 & 79.2 $\pm$ 1.3     & \textbf{85.7 $\pm$ 0.7}  \\
		\bottomrule
	\end{tabu}
\end{table}

\section{Related Work}\label{sec:review}

\paragraph{\textbf{Graph neural networks.}}
Recent years have seen a surge in applying GNNs for representation learning over graphs, and
numerous GNN models have been proposed for graph representation learning, including Graph Convolutional Networks \cite{kipf2017semi}, Graph Attention Networks \cite{velivckovic2017graph}, GraphSAGE \cite{hamilton2017inductive}, Graph Isomorphism Networks \cite{xu2018powerful}, Jumping Knowledge Networks \cite{pmlr-v80-xu18c}, Gated Graph Neural Networks \cite{li2015gated}, and so on. We refer the reader to the available surveys on GNNs~\cite{hamilton2017representation, wu2020comprehensive} for other models and discussion on their performance and applications.

\paragraph{\textbf{Local differential privacy.}}
Local differential privacy has become increasingly popular for privacy-preserving data collection and analytics, as it does not need any trusted aggregator. There have been several LDP mechanisms on estimating aggregate statistics such as frequency \cite{wang2017locally,bassily2015local,erlingsson2014rappor}, mean \cite{ding2017collecting, duchi2018minimax, wang2019collecting} heavy hitter \cite{wang2019locally}, and frequent itemset mining \cite{qin2016heavy}. There are also some works focusing on learning problems, such as probability distribution estimation \cite{acharya2018communication,duchi2018minimax,kairouz2016discrete}, heavy hitter discovery \cite{wang2019locally, bun2019heavy, bassily2017practical}, frequent new term discovery \cite{wang2018privtrie}, marginal release \cite{cormode2018marginal}, clustering \cite{nissim2018clustering}, and hypothesis testing \cite{gaboardi2018local}. Specifically, LDP frequency oracles are considered as fundamental primitives in LDP, and numerous mechanisms have been proposed \cite{erlingsson2014rappor,bassily2015local,wang2017locally,bassily2017practical,acharya2019hadamard,ye2018optimal}. Most works rely on techniques like Hadamard transform \cite{acharya2019hadamard, bassily2017practical} and hashing \cite{wang2017locally}. LDP frequency oracles are also used in other tasks, e.g., frequent itemset mining \cite{wang2018locally, qin2016heavy}, and histogram estimation \cite{wang2016mutual, wang2016using, kairouz2016discrete}.

\paragraph{\textbf{Privacy attacks on GNNs.}}
Several recent works have attempted to characterize potential privacy attacks associated with GNNs and quantify the privacy leakage of publicly released GNN models or node embeddings that have been trained on private graph data. He \etal~\cite{263820} proposed a series of link stealing attacks on a GNN model, to which the adversary has black-box access. They show that an adversary can accurately infer a link between any pair of nodes in a graph used to train the GNN model. Duddu \etal~\cite{duddu2020quantifying} presents a comprehensive study on quantifying the privacy leakage of graph embedding algorithms trained on sensitive graph data. More specifically, they introduce three major classes of privacy attacks on GNNs, namely membership inference, graph reconstruction, and attribute inference attack, under practical threat models and adversary assumptions. Finally, Wu \etal~\cite{wu2020model} propose a model extraction attack against GNNs by generating legitimate-looking queries as the normal nodes among the target graph, and then utilizing the query responses accessible structure knowledge to reconstruct the model. Overall, these works underline many privacy risks associated with GNNs and demonstrate the vulnerability of these models to various privacy attacks.

\paragraph{\textbf{Privacy-preserving GNN models.}}
While there is a growing interest in both theory and applications of GNNs, there have been relatively few attempts to provide privacy-preserving graph representation learning algorithms. Xu \etal~\cite{xu2018dpne} proposed a differentially private graph embedding method by applying the objective perturbation on the loss function of matrix factorization. Zhang and Ni~\cite{zhang2019graph} proposed a differentially private perturbed gradient descent method based on Lipschitz condition~\cite{jha2013testing} for matrix factorization-based graph embedding. Both of these methods target classic graph embedding algorithms and not GNNs.
Li \etal~\cite{li2020adversarial} presented a graph adversarial training framework that integrates disentangling and purging mechanisms to remove users' private information from learned node representations. Liao \etal~\cite{liao2020graph} also follow an adversarial learning approach to address the attribute inference attack on GNNs, where they introduce a minimax game between the desired graph feature encoder and the worst-case attacker.
However, both of these works assume that the server has complete access to the private data, which is as opposed to our problem setting.

There are also recent approaches that attempted to address privacy in GNNs using federated and split learning. Mei \etal~\cite{9005983} proposed a GNN based on structural similarity and federated learning to hide content and structure information. Zhou \etal~\cite{zhou2020privacy} tackled the problem of privacy-preserving node classification by splitting the computation graph of a GNN between multiple data holders and use a trusted server to combine the information from different parties and complete the training. However, as opposed to our method, these approaches rely on a trusted third party for model aggregation, and their privacy protection is not formally guaranteed.
Finally, Jiang \etal~\cite{jiang2020federated} proposed a distributed and secure framework to learn
the object representations in video data from graph sequences based on GNN and federated learning, and design secure aggregation primitives to protect privacy in federated learning. However, they assume that each party owns a series of graphs (extracted from video data), and the server uses federated learning to learn an inductive GNN over this distributed dataset of graphs, which is a different problem setting than the node data privacy we studied.

\section{Conclusion}\label{sec:conclusion}

In this paper, we presented a locally private GNN to address node data privacy, where graph nodes have sensitive data that are kept private, but a central server could leverage them to train a GNN for learning rich node representations. To this end, we first proposed the \emph{multi-bit mechanism}, a multidimensional \ldp algorithm that allows the server to privately collect node features and estimate the first-layer graph convolution of the GNN using the noisy features. Then, to further decrease the estimation error, we introduced KProp, a simple graph convolution layer that aggregates features from higher-order neighbors, which is prepended to the backbone GNN. Finally, to learn the model with perturbed labels, we proposed a learning algorithm called Drop that utilizes KProp for label denoising. Experimental results over real-world graph datasets on node classification demonstrated that the proposed framework could maintain an appropriate privacy-utility trade-off.

The concept of privacy-preserving graph representation learning is a novel field with many potential future directions that can go beyond node data privacy, such as link privacy and graph-level privacy. For the presented work,
several future trends and improvements are imaginable. Firstly, in this paper, we protected the privacy of node features and labels, but the graph topology is left unprotected. Therefore, an important future work is to extend the current setting to preserving the graph structure as well.
Secondly, we would like to explore other neighborhood expansion mechanisms that are more effective than the proposed KProp. Another future direction is to develop more rigorous algorithms for learning with differentially private labels, which is left unexplored for the case of GNNs.
Finally, an interesting future work would be to combine the proposed LPGNN with deep graph learning algorithms to address privacy-preserving classification over non-relational datasets with low communication cost.

\begin{acks}
	This work was supported by the \grantsponsor{snsf}{Swiss National Science Foundation (SNSF)}{http://www.snf.ch/en/Pages/default.aspx} through the Dusk2Dawn project (Sinergia program) under grant number \grantnum{snsf}{173696}. 
	Additional support was provided by the European Commission under European Horizon 2020 Programme, grant number 951911, AI4Media project.
	We would like to thank Emiliano De Cristofaro, Hamed Haddadi, Nikolaos Karalias, and Mohammad Malekzadeh for their helpful comments on eariler drafts of this paper.
\end{acks}

\bibliographystyle{ACM-Reference-Format}
\bibliography{paper}

\appendix



\section{Deferred Theoretical Arguments}\label{sec:proof}

\subsection{Theorem~\ref{th:dp}}
\begin{proof}
	Let $\mathcal{M}(\mathbf{x})$ denote the multi-bit encoder (Algorithm~\ref{alg:mbm}) applied on the input vector $\mathbf{x}$. Let $\mathbf{x}^* = \mathcal{M}(\mathbf{x})$ be the encoded vector corresponding to $\mathbf{x}$. We need to show that for any two input features $\mathbf{x}_1$ and $\mathbf{x}_2$, we have  $\frac{\Pr\left[\mathcal{M}(\mathbf{x}_1) = \mathbf{x}^*\right]}{\Pr\left[\mathcal{M}(\mathbf{x}_2) = \mathbf{x}^*\right]} \le e^\epsilon$.

	According to Algorithm~\ref{alg:mbm}, for any dimension $i\in\{1,2,\dots,d\}$, it can be easily seen that $x^*_i\in\{-1,0,1\}$. The case $x^*_i = 0$ occurs when $i\notin\mathcal{S}$ with probability $1-\frac{m}{d}$, therefore:
	\begin{equation}\label{eq:zero}
		\frac{\Pr\left[\mathcal{M}(\mathbf{x}_1)_i = 0\right]}{\Pr\left[\mathcal{M}(\mathbf{x}_2)_i = 0\right]} = \frac{1-m/d}{1-m/d} = 1 \le e^\epsilon, \quad \forall \epsilon > 0
	\end{equation}
	According to Algorithm~\ref{alg:mbm}, in the case of $x^*_i\in\{-1,1\}$, we see that the probability of getting $x^*_i = 1$ ranges from $\frac{m}{d}\cdot\frac{1}{e^\ek + 1}$ to $\frac{m}{d}\cdot\frac{e^\ek}{e^\ek + 1}$ depending on the value of $x_i$. Analogously, the probability of  $x^*_i = -1$ also varies from $\frac{m}{d}\cdot\frac{1}{e^\ek + 1}$ to $\frac{m}{d}\cdot\frac{e^\ek}{e^\ek + 1}$. Therefore:
	\begin{align*}
		\frac{\Pr\left[\mathcal{M}(\mathbf{x}_1)_i \in\{-1,1\}\right]}{\Pr\left[\mathcal{M}(\mathbf{x}_2)_i \in\{-1,1\}\right]} & \le \frac{\max\Pr\left[\mathcal{M}(\mathbf{x}_1)_i \in\{-1,1\}\right]}{\min\Pr\left[\mathcal{M}(\mathbf{x}_2)_i \in\{-1,1\}\right]}   \\
		                                                                                                                        & \le \frac{\frac{m}{d}\cdot\frac{e^\ek}{e^\ek + 1}}{\frac{m}{d}\cdot\frac{1}{e^\ek + 1}} \le e^\ek \numberthis \label{eq:plusminusone}
	\end{align*}
	Consequently, we have:
	\begin{align*}
		\frac{\Pr\left[\mathcal{M}(\mathbf{x}_1) = \mathbf{x}^*\right]}{\Pr\left[\mathcal{M}(\mathbf{x}_2) = \mathbf{x}^*\right]} & = \prod_{i=1}^{d}\frac{\Pr\left[\mathcal{M}(x_1)_i = {x}^*_i\right]}{\Pr\left[\mathcal{M}({x}_2)_i = {x}^*_i\right]}                                           \\
		                                                                                                                          & = \prod_{j|x^*_j=0}\frac{\Pr\left[\mathcal{M}(x_1)_j = 0\right]}{\Pr\left[\mathcal{M}({x}_2)_j = 0\right]}                                                     \\ &\quad \times \prod_{k|x^*_k\in\{-1,1\}}\frac{\Pr\left[\mathcal{M}(x_1)_k\in\{-1,1\}\right]}{\Pr\left[\mathcal{M}({x}_2)_k\in\{-1,1\}\right]} \numberthis \\
		                                                                                                                          & = \prod_{x^*_k\in\{-1,1\}}\frac{\Pr\left[\mathcal{M}(x_1)_k\in\{-1,1\}\right]}{\Pr\left[\mathcal{M}({x}_2)_k\in\{-1,1\}\right]} \numberthis \label{eq:elimone} \\
		                                                                                                                          & \le \prod_{x^*_k\in\{-1,1\}}e^\ek \numberthis \label{eq:changeem}                                                                                              \\
		                                                                                                                          & \le e^\epsilon \numberthis \label{eq:finaleps}
	\end{align*}
	which concludes the proof. In the above, (\ref{eq:elimone}) and (\ref{eq:changeem}) follows from applying (\ref{eq:zero}) and (\ref{eq:plusminusone}), respectively, and (\ref{eq:finaleps}) follows from the fact that exactly $m$ number of input features result in non-zero output.

\end{proof}

\subsection{Proposition~\ref{prop:unbiased}}
We first establish the following lemma and then prove Proposition~\ref{prop:unbiased}:

\begin{lemma}\label{lem:expvar}
	Let $\mathbf{x}^*$ be the output of Algorithm~\ref{alg:mbm} on the input vector $\mathbf{x}$. For any dimension $i\in\{1,2,\dots,d\}$, we have:
	\begin{equation}
		\E\left[x^*_i\right] = \frac{m}{d}\cdot\frac{e^\ek-1}{e^\ek+1}\cdot\left( 2\cdot\frac{x_i - \alpha}{\beta-\alpha} -1 \right)
	\end{equation}
	and
	\begin{equation}
		Var\left[x^*_i\right] = \frac{m}{d} - \left[\frac{m}{d}\cdot\frac{e^\ek-1}{e^\ek+1}\cdot\left( 2\cdot\frac{x_i - \alpha}{\beta-\alpha} -1 \right)\right]^2
	\end{equation}
\end{lemma}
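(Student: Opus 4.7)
The plan is to exploit the two-stage structure of Algorithm~\ref{alg:mbm}: the sampling indicator $s_i \in \{0,1\}$ is drawn first (with $\Pr[s_i=1]=m/d$, since $\mathcal{S}$ is a uniform size-$m$ subset of $\{1,\dots,d\}$), and then the Bernoulli $t_i$ is drawn independently of $s_i$. Writing $x^*_i = s_i(2t_i-1)$ and using this independence reduces the whole statement to computing $\E[s_i]$, $\E[s_i^2]$, and $\E[2t_i-1]$ separately, each of which is a one-line calculation.

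First I would note $\E[s_i] = \E[s_i^2] = m/d$ because $s_i$ is a Bernoulli$(m/d)$ indicator. Then I would compute $\E[2t_i-1] = 2p_i - 1$, where $p_i$ is the Bernoulli parameter on line~4 of Algorithm~\ref{alg:mbm}. The only real algebra in the proof is combining
\[
2p_i - 1 = \frac{2}{e^\ek+1} - 1 + 2\cdot\frac{x_i-\alpha}{\beta-\alpha}\cdot\frac{e^\ek-1}{e^\ek+1}
\]
over the common denominator $e^\ek + 1$; after simplification the first two terms collapse to $-\frac{e^\ek-1}{e^\ek+1}$, which factors out to give $\E[2t_i-1] = \frac{e^\ek-1}{e^\ek+1}\bigl(2\frac{x_i-\alpha}{\beta-\alpha}-1\bigr)$. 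The expectation formula then follows immediately by independence, $\E[x^*_i] = \E[s_i]\,\E[2t_i-1]$.

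For the variance, the key observation is that $(2t_i-1)^2 = 1$ almost surely because $t_i\in\{0,1\}$, so $(x^*_i)^2 = s_i^2(2t_i-1)^2 = s_i$, giving $\E[(x^*_i)^2] = m/d$. The variance formula then drops out of $Var[x^*_i] = \E[(x^*_i)^2] - (\E[x^*_i])^2$, substituting the expectation computed above. I do not anticipate any real obstacle: the trickiest step is the one-line simplification of $2p_i - 1$, and everything else is the standard decomposition of a product of independent random variables together with the $\pm 1$ nature of $2t_i-1$.
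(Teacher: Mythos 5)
Your proposal is correct and follows essentially the same route as the paper's proof: both decompose $x^*_i = s_i(2t_i-1)$, reduce the expectation to $\frac{m}{d}(2\E[t_i]-1)$ (the paper by conditioning on $s_i$, you by independence, which amounts to the same computation), and obtain the variance from $(2t_i-1)^2=1$ so that $\E[(x^*_i)^2]=m/d$. The algebraic simplification of $2p_i-1$ is identical to the paper's.
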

\begin{proof}
	For the expectation, we have:
	\begin{align*}
		\E\left[x^*_i\right] & = \E\left[x^*_i\mid s_i=0\right]\Pr(s_i=0) + \E\left[x^*_i\mid s_i=1\right]\Pr(s_i=1)     \\
		                     & = \frac{m}{d}\cdot\left(2\E\left[t_i\right] - 1\right) \numberthis \label{eq:lm1:proof:1}
	\end{align*}
	Since $t_i$ is a Bernoulli random variable, we have:
	\begin{equation} \label{eq:lm1:proof:2}
		\E\left[t_i\right] = \frac{1}{e^\ek + 1} + \frac{x_{i} - \alpha}{\beta - \alpha}\cdot\frac{e^\ek - 1}{e^\ek + 1}
	\end{equation}
	Combining (\ref{eq:lm1:proof:1}) and (\ref{eq:lm1:proof:2}) yields:
	\begin{align*}
		\E\left[x^*_i\right] & = \frac{m}{d}\cdot\left[2\left(\frac{1}{e^\ek + 1} + \frac{x_{i} - \alpha}{\beta - \alpha}\cdot\frac{e^\ek - 1}{e^\ek + 1}\right) - 1\right] \\
		                     & = \frac{m}{d}\cdot\left[\frac{1 - e^\ek}{e^\ek + 1} + 2\cdot\frac{x_{i} - \alpha}{\beta - \alpha}\cdot\frac{e^\ek - 1}{e^\ek + 1}\right]     \\
		                     & = \frac{m}{d}\cdot\frac{e^\ek-1}{e^\ek+1}\cdot\left( 2\cdot\frac{x_i - \alpha}{\beta-\alpha} -1 \right) \numberthis \label{eq:lm1:proof:exp}
	\end{align*}
	For the variance, we have:
	\begin{align*}
		Var\left[x^*_i\right] & = \E\left[\left(x^*_i\right)^2\right] - \E\left[x^*_i\right]^2                           \\
		                      & = \E\left[\left(x^*_i\right)^2\mid s_i=0\right]\Pr(s_i=0)                                \\
		                      & \quad + \E\left[\left(x^*_i\right)^2\mid s_i=1\right]\Pr(s_i=1) - \E\left[x^*_i\right]^2
	\end{align*}
	Given $s_i = 1$, we have $x^*_i=\pm1$, and thus $\left(x^*_i\right)^2=1$. Therefore, combining with (\ref{eq:lm1:proof:exp}), we get:
	\begin{equation}
		Var\left[x^*_i\right] = \frac{m}{d} - \left[\frac{m}{d}\cdot\frac{e^\ek-1}{e^\ek+1}\cdot\left( 2\cdot\frac{x_i - \alpha}{\beta-\alpha} -1 \right)\right]^2
	\end{equation}
\end{proof}

Now we prove Proposition~\ref{prop:unbiased}.
\begin{proof}
	We need to show that $\E\left[x^\prime_{v,i}\right] = x_{v,i}$ for any $v\in\mathcal{V}$ and any dimension $i\in\{1,2,\dots,d\}$.
	\begin{equation}
		\E\left[x^\prime_{v,i}\right] = \frac{d(\beta-\alpha)}{2m}\cdot\frac{e^\ek + 1}{e^\ek - 1}\cdot\E\left[{x}^*_{v,i}\right] + \frac{\alpha + \beta}{2}
	\end{equation}
	Applying Lemma~\ref{lem:expvar} yields:
	\begin{align*}
		\E\left[x^\prime_{v,i}\right] & = \frac{d(\beta-\alpha)}{2m}\frac{e^\ek + 1}{e^\ek - 1}\left[\frac{m}{d}\frac{e^\ek-1}{e^\ek+1}\left( 2\frac{x_{v,i} - \alpha}{\beta-\alpha} -1 \right)\right] \\
		                              & \quad + \frac{\alpha + \beta}{2}                                                                                                                               \\
		                              & = \frac{\beta-\alpha}{2}\left( 2\frac{x_{v,i} - \alpha}{\beta-\alpha} -1 \right) + \frac{\alpha + \beta}{2}                                                    \\
		                              & = x_{v,i} - \alpha - \frac{\beta - \alpha}{2} + \frac{\alpha + \beta}{2} = x_{v,i}
	\end{align*}
\end{proof}

\subsection{Proposition~\ref{prop:var}}
\begin{proof}
	According to (\ref{eq:est}), the variance of $x^\prime_{v,i}$ can be written in terms of the variance of $x^*_{v,i}$ as:
	\[ Var\left[x^\prime_{v,i}\right] = \left[\frac{d(\beta-\alpha)}{2m}\cdot\frac{e^\ek + 1}{e^\ek - 1}\right]^2\cdot Var\left[x^*_{v,i}\right]   \]
	Applying Lemma~\ref{lem:expvar} yields:
	\begin{align*}
		Var\left[x^\prime_{v,i}\right] & = \left[\frac{d(\beta-\alpha)}{2m}\cdot\frac{e^\ek + 1}{e^\ek - 1}\right]^2                                                                                    \\
		                               & \quad\times\left(\frac{m}{d} - \left[\frac{m}{d}\cdot\frac{e^\ek-1}{e^\ek+1}\cdot\left( 2\cdot\frac{x_{v,i} - \alpha}{\beta-\alpha} -1 \right)\right]^2\right) \\
		                               & = \frac{d}{m}\cdot\left(\frac{\beta-\alpha}{2}\cdot\frac{e^\ek + 1}{e^\ek - 1}\right)^2                                                                        \\
		                               & \quad - \left[ \frac{\beta-\alpha}{2}\cdot\left( 2\cdot\frac{x_{v,i} - \alpha}{\beta-\alpha} -1 \right) \right]^2                                              \\
		                               & = \frac{d}{m}\cdot\left(\frac{\beta-\alpha}{2}\cdot\frac{e^\ek + 1}{e^\ek - 1}\right)^2 - \left({x}_{v,i} - \frac{\alpha + \beta}{2}\right)^2
	\end{align*}
\end{proof}

\subsection{Proposition~\ref{prop:optm}}

\begin{proof}
	We look for a value of $m$ that minimizes the variance of the multi-bit rectifier defined by (\ref{eq:est}), i.e., $Var[x^\prime_{v,i}]$, for any arbitrary node $v\in\mathcal{V}$ and any arbitrary dimension $i\in\{1,,2,\dots,d\}$. However, based on Proposition~\ref{prop:var}, $Var[x^\prime_{v,i}]$ depends on the private feature $x_{v,i}$, which is unknown to the server. Therefore, we find the optimal $m$, denoted by $m^\star$, by minimizing the upperbound of the variance:
	\begin{equation}
		m^\star = \arg\min_m \max_x Var[x^\prime]
	\end{equation}
	where we omitted the node $v$ and dimension $i$ subscripts for simplicity. From Proposition~\ref{prop:var}, it can be easily seen that the variance is maximized when $x=\frac{\alpha+\beta}{2}$, which yields:
	\begin{align*}
		\max_x Var[x^\prime] & = \frac{d}{m}\cdot\left(\frac{\beta-\alpha}{2}\cdot\frac{e^\ek + 1}{e^\ek - 1}\right)^2 \numberthis \label{eq:maxvar}     \\
		                     & = C\cdot z\cdot\left(\frac{e^z + 1}{e^z - 1}\right)^2 = C\cdot z\cdot \coth^2(\frac{z}{2}) \numberthis \label{eq:maxvarz}
	\end{align*}
	where we set $z = \frac{\epsilon}{m}$ and $C=\frac{d}{\epsilon}\cdot\left(\frac{\beta-\alpha}{2}\right)^2$, and $\coth(.)$ is the hyperbolic cotangent. Therefore, minimizing (\ref{eq:maxvar}) with respect to $m$ is equivalent to minimizing (\ref{eq:maxvarz}) with respect to $z$, and then recover $m^\star$ as $\frac{\epsilon}{z^\star}$, where $z^\star$ is the optimal $z$ minimizing (\ref{eq:maxvarz}). More formally:
	\begin{align*}
		z^\star & = \arg\min_z \left[C\cdot z\cdot \coth^2(\frac{z}{2})\right] \\
		        & = \arg\min_z\left[ z\cdot \coth^2(\frac{z}{2})\right]        \\
	\end{align*}
	where the constant $C$ were dropped as it does not depend on $z$. The function $f(z) =  z\cdot \coth^2(\frac{z}{2})$ is a convex function with a single minimum on $(0,\infty)$, as shown in Figure~\ref{fig:z}. Taking the derivative of $f(.)$ with respect to $z$ and set it to zero gives us the minimum:
	\begin{align*}
		f'(z) & = \frac{d}{dz} z\cdot\coth^2(\frac{z}{2}) = \coth(\frac{z}{2})\left[\coth(\frac{z}{2}) - z\cdot \csch^2(\frac{z}{2})\right] = 0
	\end{align*}
	and then we have:
	\begin{equation}
		z = \frac{\coth(\frac{z}{2})}{\csch^2(\frac{z}{2})} = \frac{\sinh(z)}{2}
	\end{equation}
	Solving the above equation yields $z^\star \simeq 2.18$, and therefore we have $m^\star = \frac{\epsilon}{2.18}$. However, $m$ should be an integer value between 1 and $d$. To enforce this, we set:
	\begin{equation}
		m^\star = \max(1, \min(d, \, \left\lfloor \frac{\epsilon}{2.18}\right\rfloor))
	\end{equation}
\end{proof}

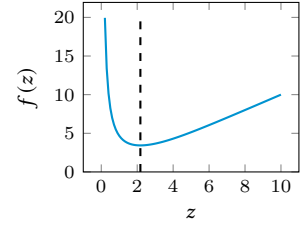
\begin{figure}[t]
	\centering
	\begin{tikzpicture}
		\begin{axis}[
				footnotesize,
				width=\figwidth,
				xlabel=$z$,
				ylabel={$f(z)$},
				domain=0:10,
				restrict y to domain=0:20,
				ymin=0,
				samples=100,
			]
			\addplot[mark=none, color=ACMBlue, thick] {x*((e^x+1)/(e^x-1))^2};
			\draw [dashed, thick] (2.1773,0) -- (2.1773,20);
		\end{axis}
	\end{tikzpicture}
	\caption{Plotting $f(z)=z\cdot \coth^2(\frac{z}{2})$. The gray dashed line indicate the location of the minimum.}
	\label{fig:z}
\end{figure}

\subsection{Corollary~\ref{cor:unbiased}}
\begin{proof}
	We need to show that the following holds for any node $v\in\mathcal{V}$:
	\[\E\left[\widehat{\mathbf{h}}_{\N(v)}\right] = {\mathbf{h}}_{\N(v)}\]
	The left hand side of the above can be written as:
	\[ \E\left[\widehat{\mathbf{h}}_{\N(v)}\right] = \E\left[\agg{}\left(\{\mathbf{x}^\prime_u, \forall u \in \mathcal{N}(v)\}\right)\right] \]
	Since \agg{} is linear, due to the linearity of expectation, the expectation sign can be moved inside \agg{}:
	\[\E\left[\widehat{\mathbf{h}}_{\N(v)}\right] = \agg{}\left(\{\E\left[\mathbf{x}^\prime_u\right], \forall u \in \mathcal{N}(v)\}\right)\]
	Finally, by Proposition~\ref{prop:unbiased}, we have:
	\[\E\left[\widehat{\mathbf{h}}_{\N(v)}\right] = \agg{}\left(\{\mathbf{x}_u, \forall u \in \mathcal{N}(v)\}\right) = {\mathbf{h}}_{\N(v)}\]
\end{proof}

\subsection{Proposition~\ref{prop:error}}
\begin{proof}
	According to (\ref{eq:est}) and depending on Algorithm~\ref{alg:mbm}'s output, for any node $u\in\mathcal{V}$ and any dimension $i\in\{1,2,\dots,d\}$, we have:
	\begin{equation*}
		x^\prime_{u,i} =
		\begin{cases}
			\frac{\alpha + \beta}{2} - \frac{d(\beta-\alpha)}{2m}\cdot\frac{e^\ek + 1}{e^\ek - 1} \quad & \text{ if } x^*_{u,i} = -1 \\
			\frac{\alpha + \beta}{2} \quad                                                              & \text{ if } x^*_{u,i} = 0  \\
			\frac{\alpha + \beta}{2} + \frac{d(\beta-\alpha)}{2m}\cdot\frac{e^\ek + 1}{e^\ek - 1} \quad & \text{ if } x^*_{u,i} = 1
		\end{cases}
	\end{equation*}
	and therefore
	\[x^\prime_{u,i}\in[\frac{\alpha + \beta}{2} - C, \frac{\alpha + \beta}{2} + C]\]
	where
	\begin{equation}
		C =\frac{d(\beta-\alpha)}{2m}\cdot\frac{e^\ek + 1}{e^\ek - 1}
	\end{equation}
	Therefore, considering that $x_{u,i}\in[\alpha, \beta]$, we get:
	\begin{equation}\label{eq:prop:error:proof:1}
		\left|x^\prime_{u,i} - x_{u,i}\right|\le\frac{\beta-\alpha}{2} + C
	\end{equation}
	and also by Proposition~\ref{prop:unbiased}, we know that
	\begin{equation}
		\E\left[x^\prime_{u,i} - x_{u,i}\right] = 0
	\end{equation}
	On the other hand, using the mean aggregator function, for any node $v\in\mathcal{V}$ and any dimension $i\in\{1,2,\dots,d\}$, we have:
	\begin{align*}
		({\mathbf{h}}_{\N(v)})_i = \frac{1}{|\N(v)|}\sum_{u\in\N(v)}x_{u,i} \\
		(\widehat{\mathbf{h}}_{\N(v)})_i = \frac{1}{|\N(v)|}\sum_{u\in\N(v)}x^\prime_{u,i} \numberthis \label{eq:prop:error:proof:2}
	\end{align*}
	Considering~\ref{eq:prop:error:proof:1} to~\ref{eq:prop:error:proof:2} and using the Bernstein inequality, we have:
	\begin{align*}
		 & \Pr\left[\left|(\widehat{\mathbf{h}}_{\N(v)})_i - ({\mathbf{h}}_{\N(v)})_i\right|\ge\lambda\right]                                                                           \\
		 & \quad= \Pr\left[\left|\sum_{u\in\N(v)}(x^\prime_{u,i} - x_{u,i})\right|\ge\lambda|N(v)|\right]                                                                               \\
		 & \quad\le 2\cdot\exp\left\{-\frac{\lambda^2|\N(v)|}{\frac{2}{|\N(v)|}\sum_{u\in\N(v)}Var[x^\prime_{u,i} - x_{u,i}] + \frac{2}{3}\lambda(\frac{\beta-\alpha}{2} + C)} \right\} \\
		 & \quad= 2\cdot\exp\left\{-\frac{\lambda^2|\N(v)|}{2Var[x^\prime_{u,i}] + \frac{2}{3}\lambda(\frac{\beta-\alpha}{2} + C)} \right\} \numberthis \label{eq:prop:error:proof:5}   \\
	\end{align*}
	We can rewrite the variance of $x^\prime_{u,i}$ in terms of $C$ as:
	\begin{equation}
		Var[x^\prime_{u,i}] = \frac{m}{d}C^2 - \left({x}_{v,i} - \frac{\alpha + \beta}{2}\right)^2
	\end{equation}
	The asymptotic expressions involving $\epsilon$ are evaluated in ${\epsilon\rightarrow0}$, which yields:
	\begin{equation}\label{eq:prop:error:proof:3}
		C = \frac{d(\beta-\alpha)}{2m}\mathcal{O}(\frac{m}{\epsilon}) = \mathcal{O}(\frac{d}{\epsilon})
	\end{equation}
	and therefore we have:
	\begin{align*}
		Var[x^\prime_{u,i}] & = \frac{m}{d}\left(\mathcal{O}(\frac{d}{\epsilon})\right)^2 - \left({x}_{v,i} - \frac{\alpha + \beta}{2}\right)^2 = \mathcal{O}(\frac{md}{\epsilon^2}) \numberthis \label{eq:prop:error:proof:4}
	\end{align*}
	Substituting (\ref{eq:prop:error:proof:3}) and (\ref{eq:prop:error:proof:4}) in (\ref{eq:prop:error:proof:5}), we have:
	\begin{align*}
		\Pr\left[\left|(\widehat{\mathbf{h}}_{\N(v)})_i - ({\mathbf{h}}_{\N(v)})_i\right|\ge\lambda\right] \le 2\cdot\exp\left\{-\frac{\lambda^2|\N(v)|}{\mathcal{O}(\frac{md}{\epsilon^2}) + \lambda\mathcal{O}(\frac{d}{\epsilon})} \right\}
	\end{align*}
	According to the union bound, we have:
	\begin{align*}
		 & \Pr\left[\max_{i\in\{1,\dots,d\}}\left|(\widehat{\mathbf{h}}_{\N(v)})_i - ({\mathbf{h}}_{\N(v)})_i\right|\ge\lambda\right]              \\
		 & \quad= \bigcup_{i=1}^d \Pr\left[\left|(\widehat{\mathbf{h}}_{\N(v)})_i - ({\mathbf{h}}_{\N(v)})_i\right|\ge\lambda\right]               \\
		 & \quad\le \sum_{i=1}^{d} \Pr\left[\left|(\widehat{\mathbf{h}}_{\N(v)})_i - ({\mathbf{h}}_{\N(v)})_i\right|\ge\lambda\right]              \\
		 & \quad= 2d\cdot\exp\left\{-\frac{\lambda^2|\N(v)|}{\mathcal{O}(\frac{md}{\epsilon^2}) + \lambda\mathcal{O}(\frac{d}{\epsilon})} \right\}
	\end{align*}
	To ensure that $\max_{i\in\{1,\dots,d\}}\left|(\widehat{\mathbf{h}}_{\N(v)})_i - ({\mathbf{h}}_{\N(v)})_i\right|<\lambda$ holds with at least $1-\delta$ probability, it is sufficient to set
	\begin{equation}
		\delta = 2d\cdot\exp\left\{-\frac{\lambda^2|\N(v)|}{\mathcal{O}(\frac{md}{\epsilon^2}) + \lambda\mathcal{O}(\frac{d}{\epsilon})} \right\}
	\end{equation}
	Solving the above for $\lambda$, we get:
	\begin{equation}
		\lambda = \mathcal{O}\left(\frac{\sqrt{d \log(d/\delta)}}{\epsilon \sqrt{|\N(v)|}}\right)
	\end{equation}
\end{proof}

\subsection{Corollary~\ref{cor:dp}}
\begin{proof}
	The training steps in Algorithm~\ref{th:dp} only process the output of the multi-bit encoder and the randomized response mechanism, which respectively provide $\epsilon_x\text{-LDP}$ and $\epsilon_y\text{-LDP}$ for each node. Private node features and labels are not used anywhere else in the algorithm except by the multi-bit encoder and the randomized response mechanism. Since Algorithm~\ref{alg:dpgnn} calls the encoder and randomized response only once per node, and due to the basic composition theorem and the robustness of differentially private algorithms to post-processing~\cite{dwork2014algorithmic}, Algorithm~\ref{alg:dpgnn} satisfies $(\epsilon_x+\epsilon_y)\text{-LDP}$ for each node.
\end{proof}

\end{document}
\endinput